\theoremstyle{plain}
\newtheorem{theorem}{Theorem}[section]
\theoremstyle{definition}
\theoremstyle{remark}
\newcommand{\vsa}{\vspace*{-0.28cm}}
\newcommand{\vsb}{\vspace*{-0.19cm}}
\newcommand{\vsc}{\vspace*{-0.16cm}}
\newcommand{\cbit}{\begin{compactitem}}
	\newcommand{\ceit}{\end{compactitem}}
\newcommand{\cben}{\begin{compactenum}}
	\newcommand{\ceen}{\end{compactenum}}
\newcommand{\mourmeth}{\text{$D^2$Match}}
\newcommand{\ourmeth}{$\mourmeth$\xspace}
\newcommand{\eat}[1]{}
\newcommand{\diag}{\mbox{diag}}
\newcommand{\aggregate}{\mbox{AGG}}
\newcommand{\concat}{\mbox{concat}}
\definecolor{Gray}{gray}{0.95}
\definecolor{Cyan}{rgb}{0.88,1,1}
\def\eqref#1{equation~\ref{#1}}
\def\1{\bm{1}}
\def\ervq{{\textnormal{q}}}
\def\ervt{{\textnormal{t}}}
\DeclareMathAlphabet{\mathsfit}{\encodingdefault}{\sfdefault}{m}{sl}
\SetMathAlphabet{\mathsfit}{bold}{\encodingdefault}{\sfdefault}{bx}{n}
\def\gC{{\mathcal{C}}}
\def\gQ{{\mathcal{Q}}}
\def\gS{{\mathcal{S}}}
\def\gT{{\mathcal{T}}}
\icmltitlerunning{D2Match: Leveraging Deep Learning and Degeneracy for  Subgraph Matching}
\begin{document}

\twocolumn[
\icmltitle{D2Match: Leveraging Deep Learning and Degeneracy for Subgraph Matching}

% It is OKAY to include author information, even for blind
% submissions: the style file will automatically remove it for you
% unless you've provided the [accepted] option to the icml2023
% package.

% List of affiliations: The first argument should be a (short)
% identifier you will use later to specify author affiliations
% Academic affiliations should list Department, University, City, Region, Country
% Industry affiliations should list Company, City, Region, Country

% You can specify symbols, otherwise they are numbered in order.
% Ideally, you should not use this facility. Affiliations will be numbered
% in order of appearance and this is the preferred way.
\icmlsetsymbol{intern}{$\dagger$}
\icmlsetsymbol{ca}{*}

\begin{icmlauthorlist}
\icmlauthor{Xuanzhou Liu}{thusz,idea,intern}
\icmlauthor{Lin Zhang}{idea}
\icmlauthor{Jiaqi Sun}{thusz}
\icmlauthor{Yujiu Yang}{thusz,ca}
\icmlauthor{Haiqin Yang}{idea,ca}

%\icmlauthor{}{sch}
%\icmlauthor{}{sch}
\end{icmlauthorlist}

\icmlaffiliation{thusz}{Shenzhen International Graduate School, Tsinghua University, Shenzhen, China}
\icmlaffiliation{idea}{International Digital Economy Academy (IDEA). $\dagger$Work done when Xuanzhou was interned at IDEA}

%\icmlsetsymbol{$\dagger$}{Work done when Xuanzhou was interned at IDEA.}
\icmlcorrespondingauthor{Yujiu Yang}{yang.yujiu@sz.tsinghua.edu.cn}
\icmlcorrespondingauthor{Haiqin Yang}{hqyang@ieee.org}

% You may provide any keywords that you
% find helpful for describing your paper; these are used to populate
% the "keywords" metadata in the PDF but will not be shown in the document
\icmlkeywords{Machine Learning, Graph Neural Networks, Subgraph Matching, ICML}

\vskip 0.3in
]

% this must go after the closing bracket ] following \twocolumn[ ...

% This command actually creates the footnote in the first column
% listing the affiliations and the copyright notice.
% The command takes one argument, which is text to display at the start of the footnote.
% The \icmlEqualContribution command is standard text for equal contribution.
% Remove it (just {}) if you do not need this facility.

\printAffiliationsAndNotice{}  % leave blank if no need to mention equal contribution
%\printAffiliationsAndNotice{\icmlEqualContribution} % otherwise use the standard text.

\begin{abstract}
Subgraph matching is a fundamental building block for graph-based applications and is challenging due to its high-order combinatorial nature.  Existing studies usually tackle it by combinatorial optimization or learning-based methods.  However, they suffer from exponential computational costs or searching the matching without theoretical guarantees. 
In this paper, we develop \ourmeth by leveraging the efficiency of Deep learning and Degeneracy for subgraph matching. 
More specifically, we first prove that subgraph matching can degenerate to subtree matching, and subsequently is equivalent to finding a perfect matching on a bipartite graph.  We can then yield an implementation of linear time complexity by the built-in tree-structured aggregation mechanism on graph neural networks.  Moreover, circle structures and node attributes can be easily incorporated in \ourmeth to boost the matching performance. 
Finally, we conduct extensive experiments to show the superior performance of our \ourmeth and confirm that our \ourmeth indeed exploits the subtrees and differs from existing GNNs-based subgraph matching methods that depend on memorizing the data distribution divergence.
\end{abstract}

\section{Introduction}

Subgraph isomorphism, or subgraph matching at the node level~\citep{McCreesh2018SubIso}, is a critical yet particularly challenging graph-related task.  It aims to determine whether a query graph is isomorphic to a subgraph of a large target graph.  It is an essential building block for various graph-based scenarios, e.g., alignment of cross-domain data~\citep{Chen2020GOT}, temporal alignment of time series~\citep{zhou2009canonical}, and motif matching~\citep{Milo2002Motif,Peng2020MotifMatchingBS}, etc. 

Existing studies for subgraph matching can be divided into two main streams: combinatorial optimization (CO)-based and learning-based methods~\citep{Vesselinova2020COLearningGraph}.
Early algorithms often formulate subgraph matching as a CO problem that aims to find all exact matches in a target graph. Unfortunately, this leads to an NP-complete issue~\citep{Ullmann1976,Cordella2004VF2}, which suffers from exponential time costs.
To alleviate the computational cost, researchers have employed approximate techniques to seek inexact solutions~\citep{Mongiov2010SigmaAS,Yan2005Graphfil,Shang2008QuickSI}, which yield suboptimal matchings. %  instead of the exact solution
An alternative solution is to frame subgraph matching as a supervised learning  problem~\citep{Bai2019SimGNN,Rex2020NeuroMatch,Bai2020GraphSim}, which utilizes the \eat{intrinsic properties of} Graph Neural Networks (GNNs).  However, the learning-based or GNN-based methods mainly aim to optimize the representations in a black-box way.  The lack of theoretical guarantees makes them inexplicable and often cannot seek the exact match subgraphs.   

In order to tackle the above challenges, we propose a white-box GNN-based solution, \ourmeth, to leverage the efficiency of Deep GNNs and Degeneracy for subgraph matching.  With rigorous theoretical proofs, we provide explainable results at each learning step.  We first prove that finding the matched nodes between the query graph and the target one can degenerate to check whether the corresponding subtrees rooted at these two nodes are subtree isomorphic.  This  degeneration allows us to check whether a perfect matching exists on a bipartite graph, which results in a polynomial time complexity  solution. The above two steps convert subgraph matching into computing an indicator matrix whose elements represent the subtree isomorphic relationship between nodes in the query graph and the target one. Hence, the matching matrix required by the CO-based methods for subgraph matching degenerates to seeking an indicator matrix, which is computed by GNNs via its intrinsic tree-structured aggregation mechanism.

Note that this implementation battles the matching mechanism directly by GNNs rather than optimizing the representations of GNNs as in the existing work.
Our implementation allows us to reduce the time cost of perfect matching from polynomial time to linear time.  Moreover, we can easily incorporate other information, including circle structures (abstracted as {\em supernodes}) and node attributes,  into our \ourmeth to boost the matching performance.

Our contribution is four-fold: (1) We propose the first white-box GNN-based model, called \ourmeth, to leverage deep learning and degeneracy for subgraph matching.  We provide rigorous theoretical proofs to guarantee that subgraph matching can degenerate to subtree matching, and finally perfect matching on a bipartite graph.
{(2) To the best of our knowledge, this is the first GNN-based model to tackle subgraph matching directly, which degenerates the matching matrix required by the CO-based methods to an indicator matrix computed by GNNs via the intrinsic tree-structured aggregation mechanism.  This allows us to compute the indicator matrix in linear time. }  (3) Our \ourmeth can easily include other information, including circle structures and node attributes to boost the model performance.  (4) Extensive empirical evaluations show that \ourmeth outperforms state-of-the-art subgraph matching methods by a substantial margin, and uncovered that learning-based methods tend to capture the divergence of the data distribution rather than exploiting graph structures.

\begin{figure*}[tp]
\begin{center}
\centerline{\includegraphics[trim={0cm 7.8cm 0cm 0cm},clip, width=1.0\textwidth]{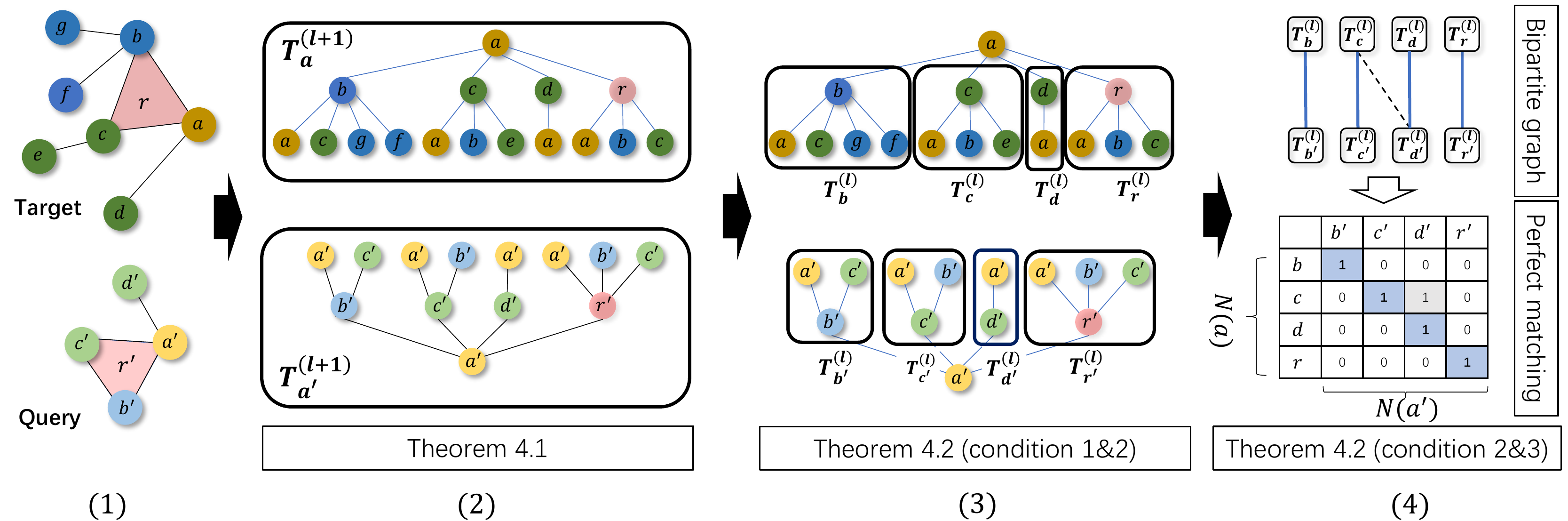}}
\caption{ \footnotesize 
Illustration of the proposed degeneracy procedure for subgraph matching: the problem of whether node $a'$ in the query graph matches node $a$ in the target one {\em degenerate} to check whether the constructed $(l+1)$-depth subtrees rooted at node $a$ and $a'$ are subtree isomorphic.  This corresponds to Step (2) and is guaranteed by Theorem~\ref{thm:subtree_isomorphim}.  The procedure is to check whether $T_{a}^{(l+1)} $ and $T_{a'}^{(l+1)}$ are subtree isomorphic.  As shown in Step (3) and guaranteed by conditions 1\&2 in Theorem~\ref{thm:equivalent_properties}, this is equivalent to checking whether subtree isomorphism holds for every $l$-depth subtrees rooted in $N(a')$ to a unique $l$-depth subtree in $N(a)$, where $N(\cdot)$ is the neighbor set of a given node.  After that, based on conditions 2\&3 in Theorem~\ref{thm:equivalent_properties}, the problem of subtree isomorphic is equivalent to checking whether there is perfect matching on the bipartite graph from every $l$-depth subtrees rooted in $N(a')$ and $N(a)$, respectively.  This corresponds to Step (4), where the upper part represents the constructed bipartite graph, and the lower part is its adjacency matrix.  By running the Hall's marriage algorithm, we can determine the perfect matching, where the selected edges are highlighted by the blue areas.  The computation procedure for this indication matrix is implemented via the intrinsic tree-structured aggregation mechanism on GNNs, which is proved in  Theorem~\ref{theo:znw} and guarantees the linear time cost as analyzed in Sec.~\ref{sec:complexity}.
}
\label{fig:introduction}
\end{center}
\vskip -0.2in
\end{figure*}

\vsa 
\section{Related work}
Subgraph matching is to check whether a query graph is subgraph isomorphic to the target one~\citep{McCreesh2018SubIso}.  Here, we highlight three main lines of related work:  

\textbf{Combinatorial optimization (CO)-based methods} first tackle subgraph matching by only modeling graph structure~\citep{Ullmann1976}. 
Some later work starts to facilitate both graph structure and node attributes~\citep{he2008GraphQL,Shang2008QuickSI,Han2013TurboisoTU, Bhattarai2019CECI}.  
These combinatorial optimization methods often rely on backtracking~\citep{PriestleyW94}, i.e., heuristically performing matching on each pair of nodes from the query and the target graphs.  Such methods suffer from exponential computing costs.  A mitigated solution is to employ an inexact matching strategy.  
Early methods first define metrics to measure the similarity between the query graph and the target graph.  
Successive algorithms follow this strategy and propose more complex metrics.  For example,~\citet{Mongiov2010SigmaAS} convert the graph matching problem into a set-cover problem to attain a polynomial complexity solution.
~\citet{Yan2005Graphfil} introduce a thresholding method to filter out mismatched graphs.
~\citet{Khan2011Neighbor} define a metric based on neighborhood similarity and employ an information propagation model to find similar graphs. \citet{Kosinov2002Eigen} and \citet{Caelli2004Eigen} align the nodes' eigenspace and project them to the eigenspace via clustering for matching.
However, most of these algorithms cannot scale to large graphs due to the high computational cost, and their hand-crafted features make them hard to generalize to complex tasks.

\textbf{Learning-based methods} typically compute the similarity between the query and target graphs, e.g., comparing their embedding vectors.
~\citet{Bai2019SimGNN} adopt GNNs to learn representations of the graphs and employs a neural tensor network to match the representation of graph pairs.
One immediate challenge is that a single graph embedding vector cannot capture the partial order of subgraph isomorphism. 
Thus,~\citet{Rex2020NeuroMatch} train a GNN model to represent graphs while incorporating order embeddings to learn the partial order.  These methods can compute graph-level representations, achieving high computational efficiency.  However, they miss the node-level information, which may lose critical details in subgraph matching. 
To perform node-level matching, several methods~\citep{Bai2020GraphSim,Li2019GMN} introduce the node-level representation into the \eat{node-level matching} problem.
These methods often adopt different attention mechanisms to generate pairwise relations.
However, abusing the attention mechanism makes the model lack interpretability and theoretical guarantee.
Others transform the subgraph matching problem into an edge matching problem and generate prediction results through the matching matrix obtained by Sinkhorn's algorithm~\citep{Roy2022IsoNet}, thereby providing interpretability for the model.
The process of turning node matching into edge matching, however, loses necessary information about edges' relation, such as edges' common nodes, which hurts the expressibility of the model.

\textbf{Graph Neural Networks (GNNs)} are powerful techniques~\citep{xu2019powerful, kipf2017semisupervised} yielding breakthroughs in many key applications~\citep{hamilton2018inductive}. 
{Graph neural networks mostly iteratively aggregate information that can be expressed as follows,
\begin{equation}
\begin{aligned}
    &H^{(l+1)} = \aggregate_{f}(A,H^{(l)}) \\
    &H_{i:}^{(l+1)} = \phi\left(H_{i:}^{(l)}, f\left(\left\{H_{j:}^{(l)}: j \in \mathcal{N}(i)\right\}\right)\right)
\end{aligned}
\end{equation}
where $f(\cdot)$ is the aggregation function such as mean or max; $\phi$ is the update function. $H^{(l)}$ is the representation matrix and its $i$-th row, denoted by $H_{i:}$, is the representation of node $i$.
}
Over the last few years, there is considerable progress in proposing different ways of aggregating.  For example, 
GraphSAGE~\citep{hamilton2018inductive} aggregates node features with mean/max/LSTM pooled neighboring information. 
Graph Attention Network (GAT)~\citep{Petar2018graph} aggregates neighbor information using learnable attention weights.  A close work for subgraph matching is Graph Isomorphism Network (GIN)~\citep{xu2019powerful}, which converts aggregation as a learnable function based on the Weisfeiler-Lehman (WL) test to maximize the performance of GNNs. 
{However, the WL test~\citep{xu2019powerful} cannot address the subgraph matching problem because it hashes the tree structure and ignores the partial order information for subgraph matching.}
\vsa

\section{Preliminary}
We define some notations accordingly: Let $A_\gQ$ and $A_\gT$ be the adjacency matrix of the query graph $G_\gQ$ and the target graph $G_\gT$, respectively.  $N(\cdot)$ denotes the neighbor set of a given node or a given set.  $|\cdot|$ denotes the cardinality of a set.  {$T_{v}^{(l)}$ defines the subtree whose root is $v$ and expands up to $l$-hop neighbors of $v$ (or $l$-depth of the subtree).  In the paper, the concepts of the $l$-hop neighbors and the $l$-depth subtrees are interchangeable. }

{\bf Problem Definition:}
Suppose we are given a query graph, $G_\gQ(V_\gQ,E_\gQ)$, and a target graph, $G_\gT(V_\gT,E_\gT)$. Here, $(V_\gQ,E_\gQ)$ and $(V_\gT,E_\gT)$ are the pairs of vertices and edges related to the query graph and the target graph, respectively.  Besides, the node attributes of the query graph and the target graph are denoted as 
${X}_\gQ\in  \mathbb{R}^{ \left|V_\gQ\right| \times D}$ and  ${X}_\gT\in  \mathbb{R}^{ \left|{V}_\gT\right|  \times D}$, respectively, where $D$ is the dimension of the node attributes. The problem of subgraph matching is to identify whether the query graph $G_\gQ$ is subgraph isomorphic to the target graph $G_\gT$, i.e. if there exists an injective $\xi: V_\gQ \to V_\gT$ such that \eat{$\forall u \in V_\gQ, (X_\gQ)_{u} = (X_\gT)_{f(u)}$ and} $\forall u, v \in V_\gQ, (u,v)\in E_\gQ \Leftrightarrow (\xi(u),\xi(v))\in E_\gT$.

The injective can be represented by a matching matrix, $S\in \{0, 1\}^{|V_\gT| \times |V_\gQ|}$, where $S_{ij}=1$ if and only if node pair$(i,j)$ is matched, i.e., $\xi(j) = i$. Therefore, subgraph isomorphism is equivalent to checking the existence of such a matrix $S$, i.e., whether there exists assignment matrix $S$ such that:
\begin{equation}\label{eq:indicate_cond}
    SA_\gQ S^T = A_\gT, \quad \mbox{s.t.} 
\begin{cases}
\sum_{i}{S_{ij}} = 1, \forall j\\
\sum_{j}{S_{ij}} \le 1, \forall i\\
S_{ij} \in \{0, 1\}, \forall i,j.
\end{cases}
\end{equation}
We denote $G_\gQ \subset G_\gT$ if $G_\gQ$ is a subgraph of $G_\gT$.  Here, we first define several key concepts:

{\bf WL Subtree:}
The Weisfeiler-Lehman (WL) test is an approximate solution to the graph isomorphism problem with linear computational complexity~\citep{shervashidze2011weisfeiler}.
The WL test performs the aggregation on nodes' labels and their neighborhoods recursively, followed by hashing the aggregated results into unique new labels.
As a result, this test produces an unordered tree for each node, called the WL subtree, which is a balanced tree with the height of the number of iterations.
After repeating the algorithm $k$ times, the obtained WL subtree for a node includes the structural information of the $k$-hop subgraph from that node.
Research shows that the expressiveness of GNNs with the message-passing mechanism is upper-bounded by the WL test~\citep{xu2019powerful}.

{\bf Subtree Generation:}
For any node $v$ in a graph, one can obtain a subgraph $Sub^{(l)}_v$ by taking the  $l$-hop neighbor of $v$. 
Given any tree generation method, e.g., the WL subtree, we always obtain its corresponding subtree whose root is $v$: \vsa  
\begin{equation}
    \begin{aligned}
     T_v^{(l)}=\Psi(Sub_v^{(l)}),  
    \end{aligned}
    \label{eq:tree_generation}
\end{equation} 
where $\Psi$ is a subtree generation function. 
Unless stated otherwise, we employ the WL subtree to generate subtrees for a given node due to its uniqueness~\citep{Xu2018}.
Instead of explicitly constructing such trees, we can run GNNs in a graph, since building a $k$-order WL subtree is equivalent to aggregating $k$ times in GNNs~\citep{Xu2018}.
Notice that traditional methods such as Breadth-First-Search (BFS) and Depth-First-Search (DFS) are not applicable at this work because they do not satisfy the uniqueness property.  In particular, the tree generated for the same node by BFS or DFS will be different due to different search order.

{\bf  Perfect Matching in Bipartite Graphs:}
A perfect matching~\citep{gibbons1985algorithmic} is a chosen edge set of a graph wherein every node of the graph is incident to exactly one edge.
Hence, according to the edge set, each node in the graph corresponds to only one other node.
The existence of a perfect matching on a bipartite graph can be resolved via Hall's Marriage Theorem~\citep{Hall1935OnRO}.
\begin{theorem}%{Hall's marriage theorem}
(Hall's marriage theorem)
  Given a bipartite graph, $B(X, Y, E)$ that has two partitions: $X$ and $Y$ and $|Y|\le |X|$, where  $E$ denotes the edges.
  The necessary and sufficient condition of the existence of the perfect matching in $B(X, Y, E)$ is : $\forall~W\subseteq Y,|W|\le |N(W)|$, where $N(W)$ is the neighborhood of $W$ defined by $N(W)=\{b_j\in X: \exists a_i\in W, (a_i,b_j)\in E\}$. 
\label{theo:Hall}
\end{theorem}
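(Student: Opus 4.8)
The plan is to prove the two implications separately, with necessity being a one-line counting argument and sufficiency being the substantive part, which I would establish by induction on $|Y|$. Throughout, since $|Y|\le |X|$, by a ``perfect matching'' I mean a matching that saturates $Y$, i.e.\ assigns to every $a_i\in Y$ a distinct partner in $X$ joined to it by an edge (when $|Y|<|X|$ this is the only sensible reading of the statement).

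\textbf{Necessity.} Suppose such a matching $M$ exists, and let $\mu\colon Y\to X$ send each $a_i$ to its $M$-partner. Then $\mu$ is injective, and for any $W\subseteq Y$ we have $\mu(W)\subseteq N(W)$ by definition of $N(W)$; hence $|W|=|\mu(W)|\le |N(W)|$.

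\textbf{Sufficiency.} Assume Hall's condition and induct on $|Y|$. If $|Y|=1$, say $Y=\{a\}$, then $|N(\{a\})|\ge 1$, so any incident edge gives the matching. For the inductive step, split into two cases. \emph{Case A: every nonempty proper $W\subsetneq Y$ satisfies $|N(W)|\ge |W|+1$.} Pick any edge $(a,b)\in E$, add it to the matching, and delete $a$ from $Y$ and $b$ from $X$ to obtain $B'$. For every $W\subseteq Y\setminus\{a\}$ the neighborhood can shrink by at most the single vertex $b$, so $|N_{B'}(W)|\ge |N_B(W)|-1\ge |W|$; Hall's condition holds in $B'$, and the induction hypothesis gives a matching saturating $Y\setminus\{a\}$, which together with $(a,b)$ saturates $Y$. \emph{Case B: some nonempty proper $W_0\subsetneq Y$ has $|N(W_0)|=|W_0|$.} Restrict to the bipartite graph $B_1$ on $(W_0,N(W_0))$; Hall's condition is inherited, so by induction there is a matching $M_1$ saturating $W_0$ (and, having the same size as $N(W_0)$, it saturates $N(W_0)$ too). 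Now restrict to $B_2$ on $(Y\setminus W_0,\ X\setminus N(W_0))$. For $W\subseteq Y\setminus W_0$, its $B_2$-neighborhood is $N_B(W\cup W_0)\setminus N(W_0)$, so using Hall's condition on $W\cup W_0$ and the tightness $|N(W_0)|=|W_0|$,
\[
|N_{B_2}(W)| = |N_B(W\cup W_0)| - |N(W_0)| \ \ge\ |W\cup W_0| - |W_0| \ =\ |W|.
\]
By induction $B_2$ has a matching $M_2$ saturating $Y\setminus W_0$; since $M_1$ and $M_2$ use disjoint vertex sets, $M_1\cup M_2$ saturates $Y$.

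\textbf{Main obstacle.} The delicate point is Case B: verifying that Hall's condition descends to the ``leftover'' graph $B_2$. This is exactly where the equality $|N(W_0)|=|W_0|$ is indispensable, and the bookkeeping—checking $N_{B_2}(W)=N_B(W\cup W_0)\setminus N(W_0)$ and invoking Hall's condition at the enlarged set $W\cup W_0$—is the one place the argument can slip. As an alternative one could route the whole proof through K\"onig's theorem or the max-flow--min-cut theorem (build a unit-capacity network $s\to Y\to X\to t$; a matching saturating $Y$ is a flow of value $|Y|$, and a minimum cut corresponds to a set $W\subseteq Y$ with residual capacity $|Y|-|W|+|N(W)|$), but I would keep the inductive proof as the primary one since it is elementary and self-contained.
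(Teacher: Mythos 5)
Your proof is correct. Note that the paper does not actually prove this statement: Hall's marriage theorem is quoted as a classical result with a citation to Hall (1935), so there is no in-paper argument to compare against. What you have written is the standard Halmos--Vaughan induction on $|Y|$ (splitting on whether a tight set $W_0$ with $|N(W_0)|=|W_0|$ exists), and the details check out: in Case A the deleted vertex $b$ removes at most one element from any neighborhood, and the strict inequality $|N(W)|\ge|W|+1$ for proper nonempty $W$ absorbs that loss; in Case B the identity $N_{B_2}(W)=N_B(W\cup W_0)\setminus N(W_0)$ together with tightness of $W_0$ correctly transfers Hall's condition to the leftover graph. Your reading of ``perfect matching'' as a $Y$-saturating matching is also the sense in which the paper uses the theorem (matching every neighbor of the query node to a distinct neighbor of the target node), so the statement you proved is the one the paper needs.
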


%\vsa 
\section{The Proposed Method}
%\vsa 
In the following, we present our proposed \ourmeth with the theoretical derivation and its extensions.

\subsection{Subgraph Matching Degeneracy}\eat{On the Degeneracy of the Subgraph Matching Problem}
\label{sec:graph-str}

We approach the subgraph matching problem from a degeneracy perspective, framing this problem as a subtree matching problem with linear complexity.
A fundamental question to the subgraph matching problem is on what conditions one subgraph is isomorphic to the other.
Since the subgraph matching problem is NP-complete, the exact answer to this question becomes impractical.
Instead, we can reduce the answer of finding both sufficient and necessary conditions to that of necessary only. 
What follows is to construct a criterion that any isomorphic pairs can meet.

Subtree isomorphism is a special task of subgraph isomorphism and can be attained in polynomial time.  Here, we aim to reduce the subgraph isomorphism problem to a subtree isomorphism problem. 
Inspired by~\citet{xu2019powerful}, we construct a criterion by checking the subtrees rooted at the nodes: if $G_\gQ$ is a subgraph of $G_\gT$, then the tree rooted at any node $q$ of $G_\gQ$ should be a subtree of the tree rooted at the matched node $t=\xi(q)$ of $G_\gT$.  As stated in the following theorem, some additional properties of these trees are needed to make this criterion hold:
\begin{theorem}\label{thm:subtree_isomorphim}
Given a target graph $G_\gT(V_\gT,E_\gT)$ and a query graph $G_\gQ(V_\gQ,E_\gQ)$, 
if $G_\gQ \subset G_\gT$, and the subtree generation function $\Psi$ as defined in Eq.~(\ref{eq:tree_generation}) meets the following  condition: 
%\todo{what does consistency mean?}
\begin{equation}
\forall \ \mbox{graph pair} \ (G_\gS, G), \mbox{if}\  G_\gS\subset G, \mbox{then}\ \Psi(G_\gS)\subset \Psi(G),
\end{equation}
then the subgraph isomorphic mapping  $\xi\!:\!V_\gQ\to V_\gT$ ensures the $l$-depth subtrees of the subgraph are isomorphic to the subtrees of the corresponding subgraph: 
\begin{equation}
\forall l\ge1,\ervq\in V_\gQ,  \ervt = \xi(\ervq)\in V_\gT \Rightarrow  T_\ervq^{(l)}\subset T_{\ervt}^{(l)},  
\label{eq:necessary}
\end{equation}
%where the subtree generation function $T$ is defined at Eq.~\ref{eq:tree_generation}.
\label{theo:necessary}
\end{theorem}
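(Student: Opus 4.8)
The plan is to establish \eqref{eq:necessary} by a direct argument that first locates a subgraph of $G_\gT$ isomorphic to the $l$-hop neighborhood subgraph of $q$, and then pushes this containment through the subtree generation function $\Psi$ using the stated monotonicity hypothesis. Fix $l\ge 1$, a node $q\in V_\gQ$, and set $t=\xi(q)$. First I would consider the $l$-hop subgraph $Sub_q^{(l)}$ induced in $G_\gQ$ around $q$, and show that its image under $\xi$ sits inside $Sub_t^{(l)}$ in $G_\gT$. Concretely, since $\xi$ is a subgraph isomorphism, it is injective and preserves adjacency in the sense that $(u,v)\in E_\gQ\Leftrightarrow(\xi(u),\xi(v))\in E_\gT$; hence any path of length $\le l$ from $q$ in $G_\gQ$ maps to a path of the same length from $t$ in $G_\gT$, so every node within $l$ hops of $q$ maps to a node within $l$ hops of $t$. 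Therefore $\xi$ restricts to an injective map from $Sub_q^{(l)}$ into $Sub_t^{(l)}$ that preserves edges, i.e.\ $Sub_q^{(l)}$ (up to relabeling by $\xi$) is a subgraph of $Sub_t^{(l)}$; in the notation of the paper, $Sub_q^{(l)}\subset Sub_t^{(l)}$.

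The second step is to invoke the hypothesis on $\Psi$: applying it to the graph pair $(G_\gS,G)=(Sub_q^{(l)},\,Sub_t^{(l)})$ with $Sub_q^{(l)}\subset Sub_t^{(l)}$ gives $\Psi(Sub_q^{(l)})\subset\Psi(Sub_t^{(l)})$. By the definition of subtree generation in Eq.~\eqref{eq:tree_generation}, $\Psi(Sub_q^{(l)})=T_q^{(l)}$ and $\Psi(Sub_t^{(l)})=T_t^{(l)}$, so $T_q^{(l)}\subset T_t^{(l)}$, which is exactly \eqref{eq:necessary}. Since $l$, $q$ (and thus $t=\xi(q)$) were arbitrary, the conclusion holds for all $l\ge1$ and all matched pairs, completing the proof.

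The step I expect to require the most care is the first one: verifying cleanly that rooted $l$-hop neighborhoods are preserved by $\xi$, and in particular that the \emph{rooted} structure (the root $q$ going to the root $t$, and hop-distances being respected) is what makes $Sub_q^{(l)}\subset Sub_t^{(l)}$ in the sense needed to feed into the $\Psi$ hypothesis. One has to be a little careful that $G_\gQ$ may not be an \emph{induced} subgraph of $G_\gT$, so $Sub_t^{(l)}$ can have extra edges and nodes; this is fine because we only need containment, not equality, and the monotonicity assumption on $\Psi$ is stated precisely for the ``$\subset$'' relation. I would also remark that the hypothesis on $\Psi$ is satisfied by the WL subtree construction used throughout the paper (adding nodes/edges can only enrich the unrolled computation tree), so the theorem applies to the concrete $\Psi$ of interest; a short justification of this, or a pointer to the WL literature, would round out the argument.
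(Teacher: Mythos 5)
Your proposal is correct and follows essentially the same route as the paper's own proof: you establish $Sub_q^{(l)}\subset Sub_t^{(l)}$ by observing that $\xi$ preserves adjacency and hence does not increase hop-distance from the root, and then push this containment through the monotonicity hypothesis on $\Psi$ to get $T_q^{(l)}\subset T_t^{(l)}$. The paper merely inserts an explicit intermediate image subgraph $G_{Sub_q^{(l)}}$ in the chain $Sub_q^{(l)}\subset G_{Sub_q^{(l)}}\subset Sub_t^{(l)}$, which is the same argument; your closing remarks on the non-induced case and on WL subtrees satisfying the hypothesis are sensible additions.
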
 \vsa 
Please find the proof in Appendix~\ref{theo:necessary_app}. 
This theorem provides a necessary condition for the potential isomorphic pairs.\eat{, i.e., those who pass the test. 
Given a query graph and a target graph, we can construct an indicator matrix $S\in R^{|V_\gT|\times |V_\gQ|}$ by setting $S_{\ervt \ervq}^{}$ to $1$ when $T_\ervq^{} \subset T_\ervt^{}$ and $0$ otherwise.  The isomorphic test becomes to check the validity of Eq.~(\ref{eq:indicate_cond}).}
{With this theorem,  given a query graph $G_\gQ$ and a target graph $G_\gT$, the node $\ervq$ from $G_\gQ$ is possible to match node $\ervt $ in $G_\gT$ only if $T_\ervq^{(l)} \subset T_\ervt^{(l)}$. 
It then becomes an isomorphic test, which checks whether there exists an assignment such that each node  $\ervq$ in $G_\gQ$ possibly corresponds to one unique node $\ervt$ in $G_\gT$.
% Then the isomorphic test becomes to check whether there exist an assignment such that each node $\ervq$ in $G_\gQ$ possibly matches one unique node $\ervq$ in $G_\gQ$. 
Forming a boolean indicator matrix $S^{(l)}\in R^{|V_\gT|\times |V_\gQ|}$:
\begin{equation}
S^{(l)}_{\ervt \ervq} = 
\begin{cases}
1, \mbox{if} \ T_\ervq^{} \subset T_\ervt^{}\\
0, \mbox{otherwise}
\end{cases},
\end{equation}
we arrive at checking whether the indicator matrix $S^{(l)}$ contains a valid assignment matrix $S$ satisfies Eq.~(\ref{eq:indicate_cond}).
}

Due to the favorite property of uniqueness, we employ the WL subtree as the generation function. What follows is how to determine the subtree isomorphism relationship between $l$-order trees. To be specific, based on WL subtree generation, we solve the subtree matching problem iteratively, intending to find a perfect matching on a bipartite graph in each iteration. The following theorem guarantees the conversion:

\begin{theorem}\label{thm:equivalent_properties}
Given a node $\ervq$ in the query graph and a node $\ervt$ in the target graph, the following three conditions are equivalent:
\begin{compactenum}[1)]
\item $T_\ervq^{(l+1)}\subset T_\ervt^{(l+1)}$.
\item There exists an injective function on the neighborhood of these nodes as $f\!:\!N(\ervq)\to N(\ervt)$, s.t. $\forall \ervq_i\in N(\ervq), \ervt_i=f(\ervq_i), T_{\ervq_i}^{(l)}\subset T_{\ervt_i}^{(l)}$. 
\item  There exists a perfect matching on the bipartite graph $B^{(l)}(N(\ervt), N(\ervq), E)$, where $\forall \ervt_j\in N(\ervt), \ervq_i\in N(\ervq), (\ervt_j, \ervq_i)\in E$ if and only if $T_{\ervq_i}^{(l)}\subset T_{\ervt_j}^{(l)}$.
\end{compactenum}
\label{theo:recursive_equal_main}
\end{theorem}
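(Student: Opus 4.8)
The plan is to establish the three-way equivalence by proving the cycle $1)\Rightarrow 2)\Rightarrow 3)\Rightarrow 1)$, exploiting the recursive structure of WL subtrees throughout. The key structural fact I would use repeatedly is that the WL subtree $T_\ervv^{(l+1)}$ rooted at $\ervv$ decomposes canonically: its root is $\ervv$, and its immediate children are exactly the roots of the subtrees $\{T_\ervu^{(l)} : \ervu \in N(\ervv)\}$. Thus a subtree embedding $T_\ervq^{(l+1)} \subset T_\ervt^{(l+1)}$ must send the root $\ervq$ to the root $\ervt$ (roots are the unique depth-0 vertices), and then must embed the child-subtrees of $\ervq$ injectively into the child-subtrees of $\ervt$ in a depth-preserving way — which is precisely an injection $f : N(\ervq)\to N(\ervt)$ with $T_{\ervq_i}^{(l)}\subset T_{f(\ervq_i)}^{(l)}$ for each $\ervq_i$. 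This makes $1)\Leftrightarrow 2)$ essentially a matter of carefully unpacking the definition of rooted subtree isomorphism (root-to-root, then recurse on children), and I would write it as: given the embedding in $1)$, restrict it to each child-subtree to build $f$; conversely, given $f$ as in $2)$, glue the child-embeddings $T_{\ervq_i}^{(l)}\subset T_{f(\ervq_i)}^{(l)}$ together with the root map $\ervq\mapsto\ervt$ to form an embedding witnessing $1)$. Injectivity of $f$ is exactly what guarantees the glued map is injective on the first layer.

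For $2)\Leftrightarrow 3)$, I would observe that the bipartite graph $B^{(l)}(N(\ervt),N(\ervq),E)$ in condition 3) has an edge $(\ervt_j,\ervq_i)$ precisely when $T_{\ervq_i}^{(l)}\subset T_{\ervt_j}^{(l)}$, so an injective function $f:N(\ervq)\to N(\ervt)$ respecting the subtree-isomorphism relation is literally the same object as a matching in $B^{(l)}$ that saturates the $N(\ervq)$ side. Since $|N(\ervq)|\le |N(\ervt)|$ is not assumed a priori, I need to be slightly careful here: a "perfect matching" on this bipartite graph in the sense used by Theorem~\ref{theo:Hall} (with $Y=N(\ervq)$ the smaller side, $X=N(\ervt)$) is a matching saturating $N(\ervq)$, which is exactly an injection $f$. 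So $2)$ and $3)$ are two phrasings of the same combinatorial condition, and the equivalence is immediate once the correspondence (matched edge $\leftrightarrow$ value of $f$) is spelled out; invoking Hall's theorem is optional for the equivalence itself but clarifies why "perfect matching" is the right notion.

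The main obstacle — and the place I would spend the most care — is the precise handling of the WL subtree decomposition in $1)\Leftrightarrow 2)$, specifically whether the root of an embedded copy of $T_\ervq^{(l+1)}$ inside $T_\ervt^{(l+1)}$ must actually be the root $\ervt$. In an unrooted subtree-isomorphism sense one could a priori embed $T_\ervq^{(l+1)}$ deeper inside $T_\ervt^{(l+1)}$, but because these are WL subtrees of bounded depth $l+1$ and $T_\ervq^{(l+1)}$ itself has depth $l+1$, any embedding must align depths level-by-level, forcing root to root; I would make this depth-counting argument explicit (the root of $T_\ervq^{(l+1)}$ is at distance $l+1$ from some leaf, and only $\ervt$ has that property in $T_\ervt^{(l+1)}$). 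The secondary subtlety is that WL subtrees can have repeated isomorphic child-subtrees (multiset structure), so "injective $f$" must be read as injective on the index set $N(\ervq)$, not merely on isomorphism types — but this is exactly what both conditions 2) and 3) say, so it does not cause trouble as long as I consistently track nodes rather than isomorphism classes. Everything else is a routine induction-free gluing argument.
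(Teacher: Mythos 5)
Your proposal is correct and follows essentially the same route as the paper: the paper likewise splits the claim into $1)\Leftrightarrow 2)$ via the root-plus-child-subtrees decomposition of the WL tree (restricting the embedding to children in one direction, gluing the child embeddings with the root map in the other, using disjointness of the $T_{\ervq_i}^{(l)}$ for injectivity) and then identifies the injection of condition 2) with a matching saturating $N(\ervq)$ for $2)\Leftrightarrow 3)$. Your explicit depth-counting argument forcing root-to-root alignment is a point the paper's proof leaves implicit, so it is a welcome refinement rather than a divergence.
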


The proof is provided in Appendix~\ref{theo:recursive_equal_app}. The equivalence of the first two conditions implies that matching subtrees of a pair of nodes is equivalent to matching all subtrees from their child nodes.  
As a result, the indicator matrix \eat{needs to} {can} be updated recursively.  That is, the indicator  matrix at the ($l+1$)-th layer, i.e., $S^{(l+1)}$, should rely on $S^{(l)}$.
Meanwhile, the equivalence of the last two conditions means that matching the subtrees from these child nodes is equivalent to solving the perfect matching on the corresponding bipartite graph whose nodes represent the subtrees of the child nodes. 
In summary, Theorem~\ref{theo:recursive_equal_main} tells us that subgraph matching is equivalent to delivering perfect matching on a bipartite graph.  A visualization of this procedure is shown in Fig.~\ref{fig:introduction}.

Motivated by Hall's marriage Theorem~\ref{theo:Hall}, we develop an efficient algorithm to address the perfect matching procedure.
A straightforward solution is to randomly select a subset {$W$} from the smaller partition of $B^{(l)}(N(\ervt), N(\ervq), E)$, i.e., $W \subseteq N(\ervq)$, and count whether $W$'s neighbors, i.e., $N(W) \subseteq N(\ervt)$ in the other partition, have more elements than $W$.
After repeating this process multiple times, we obtain a perfect matching when no instance violates the criterion. Note that we perform the procedure for all possible matched node pairs.

\textit{Is it possible to execute all pairs in parallel?}
Luckily, we can borrow GNNs to accomplish the perfect matching.
Specifically, when computing a perfect matching between node $\ervq \in G_\gQ$ and node $\ervt \in G_\gT$, one needs to find $W$ such that it satisfies $W\subseteq N(\ervq)$ according to Theorem~\ref{theo:Hall}.
In practice, we can obtain this by sampling the neighbors of node $\ervq$, equating to sampling the edges, or the Drop Edge operation~\citep{Hamilton2017Sage}.
%我没看过这篇文章（不知道里面的drop edge 和我们的是不是一种）
In this way, we obtain a sampled graph $\tilde{G}_\gQ$ from the query graph $G_\gQ$, along with its adjacency matrix $\tilde{A}_\gQ$. 
Following Theorem~\ref{theo:Hall}, we conclude that $W=N'(\ervq)$ with $N'(\ervq)\subset N(\ervq)$ since node $\ervq$'s neighbors in $\tilde{G}_\gQ$ are a subset of the original graph.
At each iteration, we will perform the counting w.r.t. $W$ and its neighbor set $N(W)$ for each node pair $(\ervt, \ervq)$, and check whether $|N(W)|\ge |W|$ holds. 
To be efficient, we define a binary matrix,  $\Phi^{(l)}(\tilde{A}_\gQ, A_\gT)\in R^{|V_\gT|\times |V_\gQ|}$, where its element at $(\ervt, \ervq)$ corresponds to the result of the node pair $(\ervt, \ervq)$ between sampled query graph $\tilde{G}_\gQ$ and target graph $G_\gT$.

Based on Theorem~\ref{theo:necessary}, we need to update the indicator matrix $S^{(l)}$ recursively,
\eat{making the update of $\Phi$ executed in recursion accordingly.}
{where we compute $\Phi^{(l)}(\tilde{A}_\gQ, A_\gT)$ with multiple sampled $\tilde{G}_\gQ$.}
We next show that computing $\Phi$ is equivalent to performing the GNN-based aggregation on the related graphs for any given $S^{(l)}$.

\begin{theorem}
Given the sampled query graph and the target graph, we can construct their adjacency matrices , $\tilde{A}_\gQ$ and $A_\gT$, and the degree matrix of the sampled query graph $\tilde{D}_\gQ=\mbox{diag}(\sum_{s}((\tilde{A}_\gQ)_{:s}))$. Here, we denote the indicator matrix at the $l$-th hop as $S^{(l)}$.  To check the validity of $|N(W)|\ge |W|$ for each node pair, we can check whether each element of $\Phi^{(l+1)}(\tilde{A}_\gQ, A_\gT)$ is true or not, where $\Phi^{(l+1)}(\tilde{A}_\gQ, A_\gT) \coloneqq Z_{N(W)}\ge 1 $, $Z_{N(W)}= \aggregate_{\scriptsize\mbox{sum}}(A_\gT,Z_W^T)$ and $  Z_W=\aggregate_{\max}(\tilde{D}_\gQ^{-1}\cdot\tilde{A}_\gQ,(S^{(l)})^T).$
\label{theo:znw}
\end{theorem}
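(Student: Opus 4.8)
The claim to prove is Theorem~\ref{theo:znw}, which translates a combinatorial Hall-condition check into two GNN-style aggregation operations. The plan is to unwind the definitions of $Z_W$ and $Z_{N(W)}$ and show that the composite operation computes, for each candidate pair $(\ervt,\ervq)$, exactly the indicator $|N(W)| \ge |W|$ where $W = N'(\ervq)$ is the neighborhood of $\ervq$ in the \emph{sampled} query graph $\tilde{G}_\gQ$ and $N(W)$ is taken inside the target graph $G_\gT$ via the edge set $E$ of the bipartite graph $B^{(l)}$ induced by $S^{(l)}$. So the proof is essentially a two-step bookkeeping argument: first characterize $Z_W$, then characterize $Z_{N(W)}$.

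\textbf{Step 1: Interpreting $Z_W = \aggregate_{\max}(\tilde{D}_\gQ^{-1}\tilde{A}_\gQ, (S^{(l)})^T)$.} I would first note that $(S^{(l)})^T \in \{0,1\}^{|V_\gQ|\times|V_\gT|}$ has rows indexed by query nodes, and $(S^{(l)})^T_{\ervq\ervt} = 1$ iff $T_\ervq^{(l)}\subset T_\ervt^{(l)}$, i.e.\ iff $(\ervt,\ervq)$ is an edge of the bipartite graph $B^{(l)}$. The row-normalized adjacency $\tilde{D}_\gQ^{-1}\tilde{A}_\gQ$ applied with a max-aggregation means: the $\ervq$-th row of $Z_W$ is the entrywise maximum of the rows $(S^{(l)})^T_{\ervq_i:}$ over $\ervq_i \in N'(\ervq)$ (the normalization constant $1/|N'(\ervq)|$ is strictly positive and hence does not affect which entries are nonzero after a $\ge 1$ threshold, but I should remark that one has to track it or note the max picks up the same support). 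Thus $Z_W$, viewed as a $\{0,1\}$-matrix after thresholding, has $(\ervq,\ervt)$-entry equal to $1$ iff some child $\ervq_i\in N'(\ervq)=W$ satisfies $(\ervt,\ervq_i)\in E$ — in other words, the support of row $\ervq$ of $Z_W^T$ is exactly $N(W)\subseteq N(\ervt)$ when we later restrict to neighbors of a fixed target node. I need to be a little careful: $Z_W$ is computed over all target nodes at once, and the restriction to $N(\ervt)$ happens in the next aggregation.

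\textbf{Step 2: Interpreting $Z_{N(W)} = \aggregate_{\scriptsize\mbox{sum}}(A_\gT, Z_W^T)$ and the final threshold.} Here $Z_W^T$ has rows indexed by target nodes $\ervt_j$, and row $\ervt_j$ is (the thresholded version of) the indicator of $\{\ervq : \ervt_j \in N(W_\ervq)\}$ — equivalently, by symmetry of "$\ervt_j$ adjacent to some $\ervq_i\in W$", the entry $(\ervt_j,\ervq)$ tells us whether $\ervt_j\in N(W)$. Summing over the target adjacency $A_\gT$ at node $\ervt$ then counts $\sum_{\ervt_j\in N(\ervt)} (Z_W^T)_{\ervt_j\ervq} = |N(\ervt)\cap N(W_\ervq)| = |N(W_\ervq)|$, because $N(W)\subseteq N(\ervt)$ by construction of $B^{(l)}$ (every element of $N(W)$ is, by definition, a neighbor of $\ervt$ in the bipartite graph, whose other side is $N(\ervt)$). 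Finally I would observe that the max-normalization in Step 1 should actually be chosen so that each row of $Z_W$ carries the value $1/|W|$ on its support (this is where $\tilde{D}_\gQ^{-1}$ enters), so that $Z_{N(W)}$ at $(\ervt,\ervq)$ equals $|N(W_\ervq)|/|W_\ervq|$, and the condition $Z_{N(W)}\ge 1$ is exactly $|N(W)|\ge|W|$, i.e.\ the Hall inequality of Theorem~\ref{theo:Hall} for the subset $W$. Running over all sampled $\tilde{G}_\gQ$ (hence all subsets $W$ that arise as sampled neighborhoods) and all pairs $(\ervt,\ervq)$ gives the claim.

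\textbf{Main obstacle.} The routine part is the index-chasing; the genuinely delicate point is reconciling the $\max$ aggregation with the normalization $\tilde{D}_\gQ^{-1}$ so that the output value is precisely $|N(W)|/|W|$ rather than merely its support — one must verify that within a single row of $(S^{(l)})^T$ all nonzero entries are $1$, so after scaling by $1/|W|$ the max over $W$ of these rows is still the constant $1/|W|$ on the union of supports, which is what makes the subsequent sum equal $|N(W)|/|W|$ exactly. A second subtlety worth spelling out is why $N(W)$ computed through $A_\gT$ (all of $\ervt$'s target-graph neighbors) coincides with $N(W)$ computed inside the bipartite graph $B^{(l)}$; this rests on the fact that edges of $B^{(l)}(N(\ervt),N(\ervq),E)$ only connect $N(\ervt)$ to $N(\ervq)$, so intersecting with $N(\ervt)$ is vacuous — I would state this explicitly as the reason the sum over $A_\gT$ does not overcount.
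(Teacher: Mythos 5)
Your proposal is correct and takes essentially the same route as the paper's proof: the paper likewise unwinds the max-aggregation to show that row $\ervq$ of $Z_W$ encodes $M(\ervq)=\{\ervt_i:\max_{\ervq'\in N'(\ervq)}S^{(l)}_{\ervt_i\ervq'}=1\}$, uses the sum-aggregation over $A_\gT$ to realize the intersection $N(\ervt)\cap M(\ervq)=N(W)$ and count its size, and then shows the division by $|W|=\tilde d_\ervq$ commutes through both aggregations so that the threshold $\ge 1$ is exactly Hall's inequality. The only cosmetic difference is ordering — the paper first computes the unnormalized binary $Z_W$ and then pushes $\tilde D_\gQ^{-1}$ inside the max, whereas you interpret the normalized aggregation directly — and both subtleties you flag (positivity of the scaling preserving the support under max, and the $A_\gT$ sum implementing the restriction to $N(\ervt)$) are precisely the points the paper's proof verifies.
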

The proof is provided in Appendix~\ref{theo:znw_app}.
Recalling Theorem~\ref{theo:Hall}, we need to check $|N(W)|\ge |W|$ for each node pair $(\ervt, \ervq)$, i.e., to check whether each element of $\Phi$ is true for each sampled $\tilde{A}_\gQ^{(k)}$. 
The condition is valid only when $\Phi$ is true for each {sample}, i.e., $\tilde{G}_\gQ^{(k)}$.  Hence we can check the criterion by the following element-wise product: \vsa 
\begin{equation}
    \begin{aligned}
    \scriptsize
        S^{(l+1)}_{subtree}=\bigodot_{k=0}^{K}{\Phi^{(l+1)}(\tilde{A}_\gQ^{(k)},A_\gT)},
    \end{aligned}
\end{equation}
where $\bigodot$ denotes the element-wise multiplication between matrices.
In practice, $\Phi^{(l+1)}(\tilde{A}_\gQ^{(k)},A_\gT)$ considers three cases:
\begin{equation}\label{eq:Phi}
    \footnotesize
\begin{cases}
       \aggregate_{\scriptsize\mbox{sum}}(A_\gT,  \aggregate_{\max}({D}_\gQ^{-1}{A}_\gQ,(S^{(l)}))^T  )\ge 1 & \! k=0 \\
       \aggregate_{\scriptsize\mbox{sum}}(A_\gT,  \aggregate_{\max}(\tilde{D}_\gQ^{-1}\tilde{A}_\gQ^{(k)},(S^{(l)}))^T  )\ge 1 &\!  k\in[1,K)\\
       \aggregate_{\min}(A_\gQ,\aggregate_{\max}( A_\gT,(S^{(l)})^T)\ge 1 & \! k=K
\end{cases}
\end{equation}
The above three cases allow us to balance the computation cost and accuracy.
\eat{To be efficient, we take the downsampling for $ k\in[1,K-1]$.}
Initially, when $k=0$, we deliver a full-size sampling for all nodes to avoid induction bias. 
When $k=K$, we perform the single-node sampling such that no node is omitted. 
The cases of $k\in[1,K-1]$ are computed via downsampling.

We want to highlight the difference between ours and other learning-based methods regarding GNNs.
Here we employ a GNN model to accomplish the procedure of subtree matching, along with theoretical equivalence.
Unlike performing the matching in our model, prior learning-based models use GNNs to capture the variance of data distribution for similarity inference since deep learning models learn distributional information to distinguish samples from different classes.

\subsection{Boosting the Matching}
It is noted that our \ourmeth cannot guarantee that all isomorphism pairs are selected precisely.  In order to boost the model performance, in the following, we design the corresponding mechanism to include more information, i.e., circle structures and node attributes to attain more precise isomorphism pairs.

%\vsa 
\subsubsection{Dealing with Circles  }
\label{sec:circles}

Prior methods often ignore circle structures though they are common in graphs and critical to be handled.  
In particular, learning-based methods rely on the expressibility of GNNs, which have difficulty identifying cyclic structures due to their WL-tree like aggregation mechanism~\citep{sato2020survey}.
The underlying idea of our \ourmeth is to select certain set of circles and construct the circle structures as {\em supernodes}.  This allows us to formulate the circle matching as a standard subtree matching problem.  Before detailing our strategy, we first present two desired properties of the selected set of circles in a graph.

{{\bf Atomic:} Let $c=(v_1,...,v_{l(c)},v_1)$ define a circle and $v(c)$ be the set of nodes of circle $c$, a circle is an {\em atomic} circle if it does not contain a smaller circle.  That is, there is no circle $c'$ such that $v(c')\subset v(c)$. A circle set $\gC$ is {\em atomic} if every circle in the set is {\em atomic}. }

{\bf Consistency:} Each query circle must correspond to one circle in the target graph if the query graph is a subgraph of the target graph , i.e., 
$\forall c_\gQ=(v_1,...,v_{l},v_1)\in \gC_\gQ, \exists c_\gT=(\xi(v_1),...,\xi(v_{l}),\xi(v_1)) \in \gC_\gT$ where $\xi$ is the subgraph isomorphic mapping.
 $\gC_\gQ$ and $\gC_\gT$ are the selected circle sets of the query graph and the target graph, respectively.
%对于atomic性，我们希望环是简单的，即原子的，否则我们可以通过组合环来得到数量级非常大的环集合，这非常不利于计算

The atomic property aims to ensure the compactness of circles, and the consistency attempts to ensure that the relation between a query and a target set of circles is injective.
These two properties ensure a well-qualified set for matching.
In practice, we can take advantage of {\em chordless cycles}~\citep{west_introduction_2000}, \eat{such as carbon rings in chemistry}%不通顺
 to serve our goal of matching circles.
We now state our theorem below to show that these cycles \eat{can be used for matching.}{satisfy the above consistency and atomic property.} 

%\vsa
\begin{theorem}
Every chordless cycle is atomic. Every chordless cycle $c_\gQ$ in an induced subgraph of the original query graph $G_\gQ$ must correspond to a chordless cycle $c_\gT$ in the origin graph $G_\gT$. 
\end{theorem}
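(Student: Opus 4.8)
The plan is to prove the two assertions separately, both directly from the definition of a chordless cycle (a cycle with no chord, i.e.\ no edge joining two non-consecutive vertices of the cycle).

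\textbf{Atomicity.} First I would argue by contradiction: suppose $c$ is a chordless cycle that is \emph{not} atomic, so there is a smaller circle $c'$ with $v(c')\subsetneq v(c)$. Since $c'$ uses only vertices of $c$ but has strictly fewer of them, it cannot traverse the vertices of $c$ in cyclic order — there must be two vertices $u,w\in v(c')$ that are consecutive on $c'$ (hence joined by an edge $(u,w)$) but \emph{not} consecutive on $c$. That edge $(u,w)$ is then a chord of $c$, contradicting chordlessness. A small amount of care is needed to make ``cannot traverse in cyclic order'' precise: I would observe that $|v(c')| < |v(c)|$ forces $c'$ to skip at least one vertex of $c$, and then pick the edge of $c'$ that ``jumps over'' a skipped vertex; its endpoints are non-adjacent on $c$, giving the chord. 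This yields the first sentence of the theorem.

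\textbf{Consistency / correspondence.} For the second claim, let $c_\gQ=(v_1,\dots,v_l,v_1)$ be a chordless cycle lying in some induced subgraph of $G_\gQ$, and let $\xi$ be the subgraph-isomorphic embedding of $G_\gQ$ into $G_\gT$. Set $c_\gT=(\xi(v_1),\dots,\xi(v_l),\xi(v_1))$. First I would check $c_\gT$ is a genuine cycle in $G_\gT$: the defining property of $\xi$ (Eq.~(\ref{eq:indicate_cond}), i.e.\ $(u,v)\in E_\gQ\Leftrightarrow(\xi(u),\xi(v))\in E_\gT$) maps each edge $(v_i,v_{i+1})$ of $c_\gQ$ to an edge of $G_\gT$, and injectivity of $\xi$ keeps the images distinct, so $c_\gT$ is a cycle of the same length. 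Then I would show $c_\gT$ is chordless: a chord of $c_\gT$ would be an edge $(\xi(v_i),\xi(v_j))\in E_\gT$ with $v_i,v_j$ non-consecutive on $c_\gQ$; pulling back through the ``$\Leftarrow$'' direction of the biconditional for $\xi$ gives $(v_i,v_j)\in E_\gQ$, which is a chord of $c_\gQ$ — contradiction. (If one only assumes $\xi$ preserves edges in the forward direction, the pullback step instead uses that $c_\gQ$ sits inside an \emph{induced} subgraph, so any edge between its vertices is present; I would phrase the argument to cover whichever convention the paper intends.)

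\textbf{Main obstacle.} The routine parts are the edge-by-edge verifications; the one genuinely delicate point is the combinatorial lemma underlying atomicity — rigorously extracting a chord from a strictly smaller sub-circle. The cleanest route is the ``skipped vertex'' argument above: because $v(c')\subsetneq v(c)$, traversing $c'$ must at some step move between two vertices of $c$ that have at least one $c$-vertex strictly between them on the cycle, and that step is the sought chord. I would make sure to handle the degenerate cases (e.g.\ $c'$ of length $3$, or $c'$ sharing all but one vertex with $c$) so the extraction is valid for every admissible $c'$.
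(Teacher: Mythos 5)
Your proposal is correct and follows essentially the same route as the paper's own (very terse) proof: atomicity comes from extracting a chord out of any strictly smaller sub-circle, and the correspondence comes from pushing the cycle forward through $\xi$ and pulling any chord of $c_\gT$ back to a chord of $c_\gQ$ via the induced-subgraph property. Your version is in fact more careful than the paper's, which simply asserts the atomicity step without the ``skipped vertex'' argument you spell out.
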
 
%\vsa
The proof is provided in Appendix,~\ref{theo:circles}.
This theorem suggests that chordless cycles satisfy the above two properties, making them suitable for representing circles in a graph.
To match circles, we introduce an augmented graph by inserting supernodes that embody these circles.
Given a length $L$, we can acquire corresponding chordless cycles whose length is no longer than $L$ for the query and target graphs as : $\gC_\gT=\{c | l(c)\le L,c\in CC_\gT\},\gC_\gQ=\{c | l(c)\le L, c \in CC_\gQ\}$ where $CC_\gT$ and $CC_\gQ$ are the chordless cycle set.
By setting $v_c$ as the supernode of any chordless circle $c$, we connect nodes from the circle $c$ to this supernode, resulting in an augmented graph. 
\eat{It is important to note that supernodes only compare against other supernodes to keep the matching of non-circles untainted.}% \textcolor{purple}
{Note that supernodes can only match other supernodes to keep the matching of non-circles untainted.}
To this end, we transform the circle matching as the subtree matching problem such that we can employ the proposed method on the augmented graph directly.
Unless otherwise stated, we keep all notations the same in the augmented graph to avoid abusing the notations.

%%%%%%%%
%%%%%%%%
%%%%%%

\subsubsection{Dealing with Nodes' Attributes}
\label{sec:node_att}
% So far, we have explicitly model the 
Apart from circle structures, subgraph matching also involves node attributes.
Within the context of subgraph matching, learning with node attributes alone may be misleading because these cannot catch structural isomorphism. 
As a result, we employ the obtained subtree indicator matrix to supervise the learning process, aiming to filter out pairs that do not pass the test in the subtree matching.
We are thus motivated to enhance the node attributes by concatenating the subtree matching indicator, resulting in the node representation for the query and target graphs as follows:
\begin{equation}
    \begin{aligned}
    \footnotesize
\begin{cases}
\!\!\!&\!\!\!\!H^{(l+1)}_\gT=GNN^{(l)}_\gT(A_\gT,\concat(H_\gT^{(l)}, MLP(S^{(l)})))\\    \!\! \!&\!\!\!\!H^{(l+1)}_\gQ=GNN^{(l)}_\gQ(A_\gQ, \concat(H_\gQ^{(l)}, MLP(S^{(l)})^T))) 
\end{cases}
    \end{aligned}
\end{equation}
Here, we employ an MLP model to reduce the effect of the difference between the node attributes and the indicator matrix, where the latter behaves as a one-hot encoding feature.
% It could be problematic if we concatenate these two directly.
% To mitigate this issue, we first process the indicator matrix with a MLP model and then perform feature fusion.
% Here, we combine the raw attributes and the obtained structure information, i.e., the subtree matching indicator, to improve the node representation.
% We know that the indicator matrix behaves like a one-hot matrix that is significantly different from the given node attributes.
% It could be problematic if we concatenate these two directly.
% To mitigate this issue, we first process the indicator matrix with a MLP model and then perform feature fusion.
We concatenate each pair of representations and then pass it to the MLP to obtain their similarity.
For the node pair $(i, j)$, we computer their similarity as  
%Using these representations, we can compute the similarity for each pair of nodes as 
% \begin{equation}
%     \begin{aligned}
%      \footnotesize
%     [S^{(l+1)}_{gnn}]_{ij}=Attention(H_\gT^{(l)},H_\gQ^{(l)}),
%     \end{aligned}
% \end{equation}
\begin{equation}
\label{eq:gnn_s}
    \begin{aligned}
     \footnotesize
    [S^{(l+1)}_{gnn}]_{ij}=MLP(\concat([H_\gT^{(l)}]_{i},[H_\gQ^{(l)}]_{j})).
    \end{aligned}
\end{equation}
% in which we concatenate each pair of representations and then pass it to the MLP to obtain their similarity.
%where $Attention$ denotes GAT-alike~\citep{GAT2017} attention process. \textcolor{purple}{TOO SIMPLE}
Now, we obtain a generalized indicator matrix that includes both the structure information and node attribute information 
\begin{equation}
    S^{(l+1)}=S_{gnn}^{(l+1)}\odot S_{subtree}^{(l+1)}
\end{equation}
%$ $.
% :
% \begin{equation}
%     \begin{aligned}
%     S^{(l+1)}=S_{gnn}^{(l+1)}\odot S_{subtree}^{(l+1)}
%     \end{aligned}
% \end{equation}

\eat{
\begin{center}
\begin{algorithm} [!ht]
\eat{\scriptsize}
\caption{ The \ourmeth algorithm}\label{alg:algo}
\begin{algorithmic}[1]
\scriptsize
\Require{A query graph $G_\gQ(V_\gQ,E_\gQ)$ with node attributes $X_\gQ$, a target graph $G_\gT(V_\gT,E_\gT)$ with node attributes $X_\gT$, iteration number: $L$, sample number: $K$. }
\Ensure{Is $G_\gQ$ isomorphic to $G_\gT$}
\State $G_\gQ(V_\gQ,E_\gQ)\gets ChordlessCycleAugment(G_\gQ)$; $G_\gT(V_\gT,E_\gT)\gets ChordlessCycleAugment(G_\gT)$;
\State $H_\gQ^{(0)}= X_\gQ$; $H_\gT^{(0)}= X_\gT$; 
\State $S^{(0)}_{subtree} = InitialAssignMatrix(X_\gT,X_\gQ)$ \Comment{Initialize assignment matrix};

\For{$l = 0, 1 ..., L-1$}
  \For{$k = 0,1,...,K$ }
    \State 
    $\tilde{A}_\gQ^{(k)} = DropEdge(A_\gQ)$; \Comment{Sample adjacency matrix}
    
    \eat{\State 
    $Z_W = \aggregate_{\max}((\tilde{D}^{k}_\gQ)^{-1}\cdot\tilde{A}^{k}_\gQ,(S^{(l)}))^T)$; \Comment{Subtree aggregation}
    \State 
    $Z_{N(W)} = \aggregate_{\scriptsize\mbox{sum}}(A_\gT,Z_W^T)$;
    \State 
    $S^{(l+1)}(\tilde{A}_\gQ^{(k)})=(F_{N(W)})\ge 1$; \Comment{Temporal subtree assignment matrix}}
    
    \State Calculate $\Phi^{(l+1)}(\tilde{A}_\gQ^{(k)}, A_\gT)$ according to Eq.\ref{eq:Phi}
    \eat{\State $k \gets k+1$;}
 \EndFor
   % end of inner loop 
    \State $ S^{(l+1)}_{subtree}=\bigodot_{k=0}^{K}{\Phi^{(l+1)}(\tilde{A}_\gQ^{(k)},A_\gT)}$;
    \Comment{Final subtree assignment matrix}
    
    \State $H^{(l+1)}_\gT=GNN^{(l)}_\gT(A_\gT,concat[H_\gT^{(l)},MLP(S^{(l)})])$; \State $H^{(l+1)}_\gQ=GNN^{(l)}_\gQ(A_\gQ,concat[H_\gQ^{(l)},MLP((S^{(l)})^T)])$;
    \Comment{GNN update} 
    % \State $H^{(l+1)}_\gQ=GNN^{(l)}_\gQ(A_\gQ,[H_\gQ^{(l)},(S^{(l)}_{subtree})^T])$; 
    \State Compute $S^{(l+1)}_{gnn}$ according to Eq.\ref{eq:gnn_s} \Comment{Final GNN assignment matrix}
    \State $S^{(l+1)}=S_{gnn}^{(l+1)}\odot S_{subtree}^{(l+1)}$; \Comment{Final assignment matrix}
    % \State $l\gets l+1$;
\EndFor % end of out loop
\State $result = CheckAssign(S^{(L)})$
\end{algorithmic}
\end{algorithm}
\end{center}\vsb \vsa 
}

%%%%%%%
%%%%%%%
%%%%%%%
%%%%%%%
%%%%%%%
%%%%%%%

\subsection{Implementation Details and Complexity Analysis}\label{sec:complexity}
We summarize the whole procedure in Algorithm~\ref{alg:algo} outlined in Appendix~\ref{app:code}.  We now analyze the time cost of our \ourmeth.  It is noted that our \ourmeth consists of two major parts: the subtree module and the GNN module.  Given $L$ layers and $K$ times of sampling, the complexity of the subtree module and the GNN module is $O(L*K*|V_\gT|*|E_\gQ|+L*|V_\gQ|*|E_\gT|)$ (according to the computation of lines 4-9 in Algo.~\ref{alg:algo}) and $O(L*|E_\gT|+L*|E_\gQ|+|V_\gT|*|V_\gQ|)$ (according to the computation of lines 10-12 in Algo.~\ref{alg:algo}), respectively.  
Since the query graph is often very small, we can treat $|V_\gQ|$ and $|E_\gQ|$ as constants. 
Therefore, the overall complexity is reduced to $O(|V_\gT|+|E_\gT|)$, attaining the linear time complexity. 
Please refer to Appendix~\ref{sec:implementations_app} for more details about the implementation.
{We also provide an empirical runtime comparison in Appendix~\ref{sec:runtime}.}

%%%%%%%%%%%%%%%%%%%%%%%%%%%%%%%
%%%%%%%%%%%%%%%%
%%%%%%%%%%%%%%%%%
\eat{
% We thus develop an two-step process to address this problem. 
In essence, we want to get the match relation between any pair of nodes across two graphs.
To do this, we first generate subtrees from the given graph by using WL-tree, so we can take advantage of the aforementioned property of subtree matching.
% Then we use the detector to find the matched pairs.
% We now present the details of solving the isomorphism problem.
Note that subtrees are generated recursively,
%, therefore, the overall isomorphism problem is executed recursively.  \todo{add stop condition}
making the process of justifying subgraph matching to be done recursively.

WL-tree is the fundation of our process. This can be realized by GNNs, such as GCN, making the overall recursive process as an optimization of GNNs. \\

}

\section{Experiments}

\begin{table*}[!th]
\caption{\footnotesize Overall performance comparison in terms of accuracy.}
    \centering
    \footnotesize
 \scalebox{0.91}{
    \begin{tabular}{c|cccccccc}
    \toprule
            & Synthetic & Proteins & Mutag & Enzymes & Aids & IMDB-Binary & Cox2 & FirstMMDB\\
        \midrule
        SimGNN~\cite{Bai2019SimGNN} &  70.5$_{\pm\text{2.72}}$ & 96.2$_{\pm\text{0.97}}$ & 98.7$_{\pm\text{0.60}}$ & 98.6$_{\pm\text{1.08}}$ & 96.5$_{\pm\text{0.68}}$ & 85.0$_{\pm\text{19.58}}$ & 99.9$_{\pm\text{0.22}}$ & 82.40$_{\pm\text{0.17}}$ \\
       
        NeuroMatch~\cite{Rex2020NeuroMatch} & 65.7$_{\pm\text{8.98}}$ & 94.5$_{\pm\text{1.73}}$ & 99.2$_{\pm\text{0.22}}$ & 97.9$_{\pm\text{1.08}}$ & 97.4$_{\pm\text{0.96}}$ & 86.5$_{\pm\text{6.51}}$ & 100.0$_{\pm\text{0.00}}$ & 80.80$_{\pm\text{0.39}}$ \\
        IsoNet~\cite{Roy2022IsoNet} & 50.0$_{\pm\text{0.00}}$ & 60.0$_{\pm\text{10.02}}$ & 94.1$_{\pm\text{2.54}}$ & 91.0$_{\pm\text{7.78}}$ & 61.5$_{\pm\text{8.51}}$ & 83.1$_{\pm\text{3.69}}$ & 95.8$_{\pm\text{3.89}}$ & / \\
        GMN-embed~\cite{Li2019GMN} & 56.6$_{\pm\text{8.61}}$ & 93.8$_{\pm\text{2.41}}$ & 90.8$_{\pm\text{6.16}}$ & 89.4$_{\pm\text{16.44}}$ & 78.3$_{\pm\text{6.92}}$ & 69.3$_{\pm\text{15.18}}$ & 69.7$_{\pm\text{18.20}}$ &  69.1$_{\pm\text{30.29}}$ \\
        
        GraphSim~\cite{Bai2020GraphSim} & 50.0$_{\pm\text{0.00}}$ & 82.5$_{\pm\text{0.31}}$ & 89.5$_{\pm\text{2.59}}$ & 88.2$_{\pm\text{1.79}}$ & 75.6$_{\pm\text{6.53}}$ & 88.9$_{\pm\text{2.81}}$ & 95.5$_{\pm\text{0.94}}$ & 
        86.6$_{\pm\text{9.71}}$ \\
        % APPNP~\cite{} & 87.92$_{\pm\text{0.20}}$ & 81.42$_{\pm\text{0.26}}$ & 88.16$_{\pm\text{0.14}}$ & 85.88$_{\pm\text{0.13}}$ & 90.40$_{\pm\text{0.34}}$ & 39.63$_{\pm\text{0.49}}$ & 59.01$_{\pm\text{0.48}}$ & 39.90$_{\pm\text{0.25}}$ \\
     
        GOT-Sim~\cite{Doan2021GOTSim} & 53.0$_{\pm\text{2.74}}$ & 57.2$_{\pm\text{8.52}}$ & 86.8$_{\pm\text{6.92}}$ & 68.7$_{\pm\text{14.15}}$ & 70.6$_{\pm\text{3.08}}$ & 81.3$_{\pm\text{14.60}}$ & 94.8$_{\pm\text{1.04}}$ & /  \\

        % GNN-BC & \\
        \midrule
        % {\bf GCF~(Ours)} & 89.45$_{\pm\text{0.22}}$ & 81.96$_{\pm\text{0.23}}$ & 89.93$_{\pm\text{0.23}}$ & 90.79$_{\pm\text{0.08}}$ & 95.45$_{\pm\text{0.15}}$ & 67.90$_{\pm\text{0.23}}$ & 73.33$_{\pm\text{0.35}}$ & 40.91$_{\pm\text{0.22}}$ \\
        %{\bf \ourmeth-pipeline} & {\bf89.45$_{\pm\text{0.22}}$} & 81.96$_{\pm\text{0.23}}$ & 89.87$_{\pm\text{0.35}}$ & {\bf 90.79$_{\pm\text{0.08}}$} & 95.36$_{\pm\text{0.14}}$ & 67.82$_{\pm\text{0.26}}$ & {\bf 73.33$_{\pm\text{0.35}}$} & 40.54$_{\pm\text{0.15}}$ \\
        { \ourmeth} & {\bf 74.3$_{\pm\text{0.22}}$} & {\bf  100.0$_{\pm\text{0.00}}$} & {\bf  100.0$_{\pm\text{0.00}}$} & {\bf 99.9$_{\pm\text{0.22}}$ } & {\bf99.5$_{\pm\text{0.27}}$} & {\bf93.3$_{\pm\text{1.03}}$} & {\bf 100.00$_{\pm\text{0.00}}$} & {\bf 100.00$_{\pm\text{0.00}}$} \\
    
        \bottomrule
    \end{tabular}
    }
%    \end{adjustbox}
    \label{tab:overall}
\end{table*} 
\begin{table*}[!th]
\footnotesize
\caption{ Results of experimenting the uniformly distributed data in terms of accuracy. }
    \scalebox{0.9}{
    \centering
    \footnotesize
    
    \begin{tabular}{c|c|ccccccccc}
    \toprule
            & \cellcolor{Gray}Synthetic$^+$ & Synthetic& Proteins$^{*}$ & Mutag$^{*}$ & Enzymes$^{*}$ & Aids$^{*}$ & IMDB-Binary$^{*}$ & Cox2$^{*}$ & FirstMMDB$^{*}$ \\
        \midrule
              SimGNN & \cellcolor{Gray}84.1$_{\pm\text{3.40}}$ & 70.5$_{\pm\text{2.72}}$ & 64.6$_{\pm\text{3.36}}$ & 80.3$_{\pm\text{6.18}}$ & 76.0$_{\pm\text{2.12}}$ & 73.2$_{\pm\text{6.06}}$ &  72.0$_{\pm\text{2.45}}$ & 88.2$_{\pm\text{2.05}}$ & 53.2$_{\pm\text{6.61}}$   \\
        % \ourmeth (w/o subtree) ~\cite{} & 71.0$_{\pm\text{2.47}}$ & \textbf{87.2}$_{\pm\text{4.15}}$ & 96.8$_{\pm\text{1.92}}$ & 94.3$_{\pm\text{4.04}}$  \\
       NeuroMatch & \cellcolor{Gray} 74.5$_{\pm\text{2.57}}$ &  65.7$_{\pm\text{8.98}}$ & 52.8$_{\pm\text{4.76}}$ & 90.4$_{\pm\text{2.88}}$ & 86.6$_{\pm\text{3.64}}$ & 75.6$_{\pm\text{17.78}}$ & 60.4$_{\pm\text{10.11}}$ & 91.0$_{\pm\text{5.70}}$ & 50.0$_{\pm\text{0.00}}$  \\
       \midrule
        \ourmeth & \cellcolor{Gray} \textbf{86.6}$_{\pm\text{1.44}}$ &  
        \textbf{74.3}$_{\pm\text{0.22}}$  &  \textbf{83.4}$_{\pm\text{2.97}}$ & \textbf{99.2}$_{\pm\text{0.84}}$ & \textbf{96.0}$_{\pm\text{2.16}}$ & \textbf{95.0}$_{\pm\text{1.41}}$ & \textbf{90.2}$_{\pm\text{1.79}}$ & \textbf{99.8}$_{\pm\text{0.45}}$ & \textbf{86.4}$_{\pm\text{7.44}}$ \\
        \bottomrule
    \end{tabular}
    }
    
    \label{tab:overal_abl_subtree}
\end{table*} 

\begin{figure*}[!ht]
	\centering
   \subfigure[\# layers  vs. accuracy]{\includegraphics[scale=0.33]{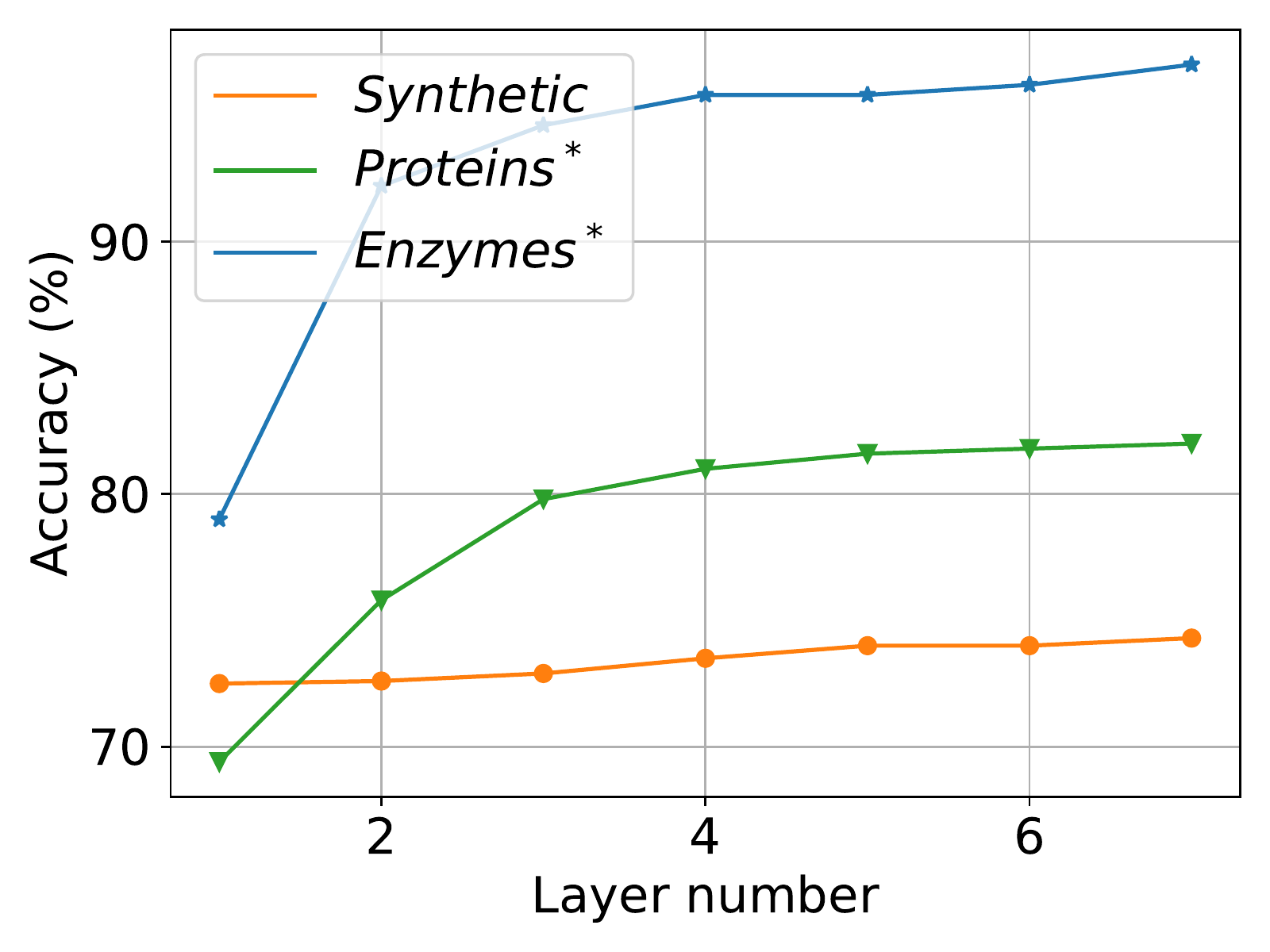}\label{fig:abs_layer}}
   \subfigure[\# sampling vs. accuracy]{\includegraphics[scale=0.33]{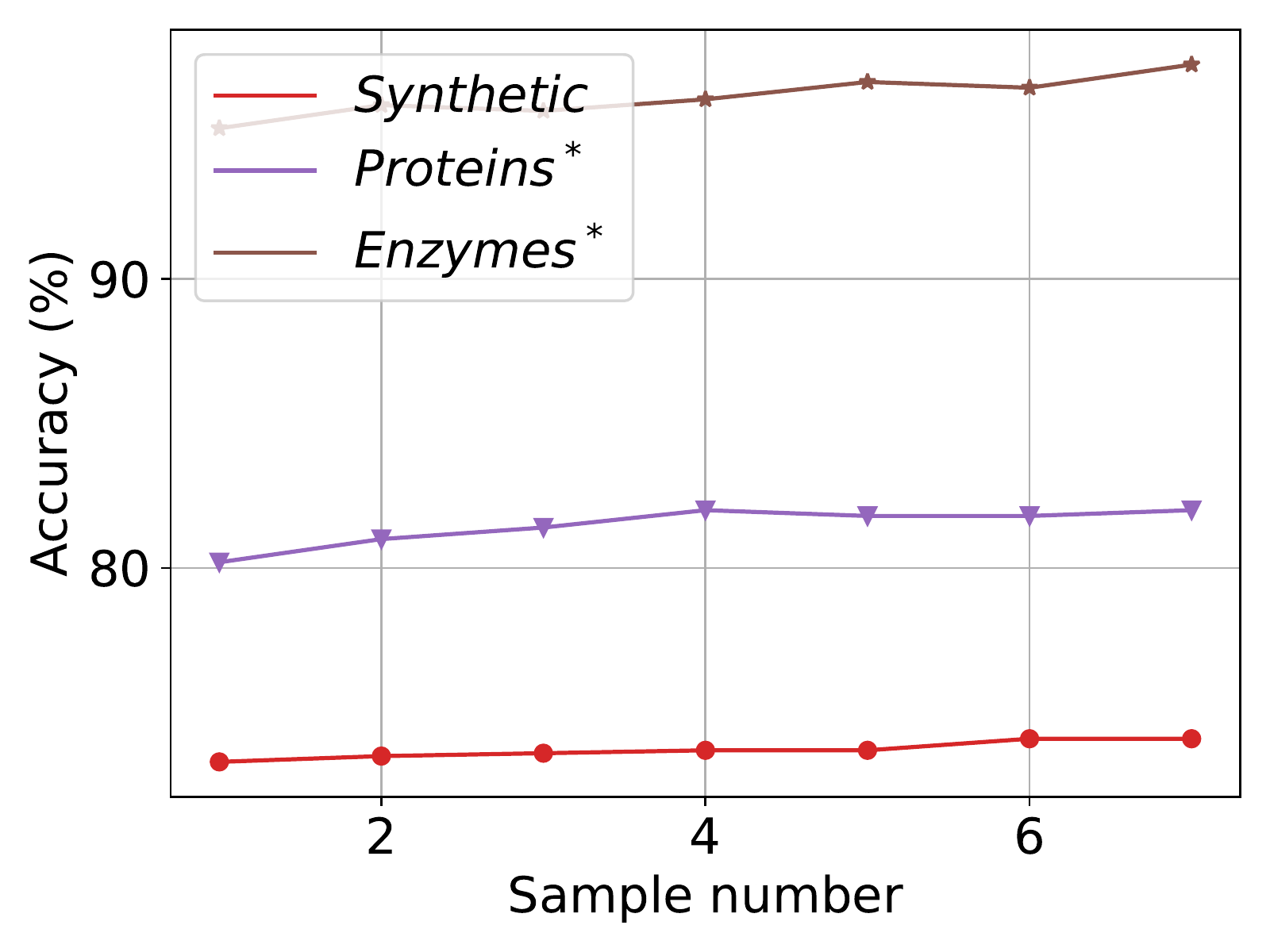}\label{fig:abs_samples}}
   \subfigure[The convergence comparison ]{\includegraphics[scale=0.33]{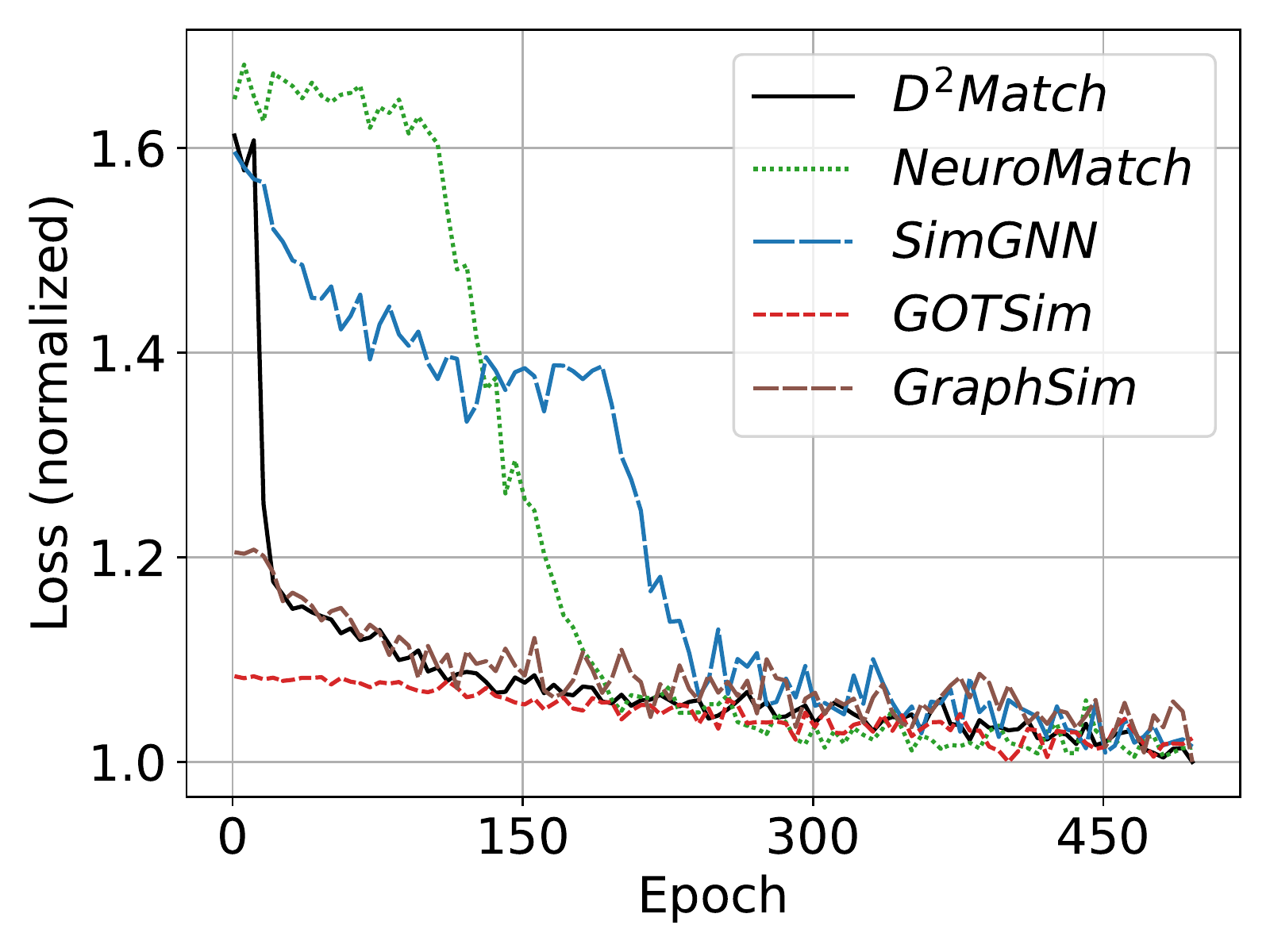}\label{fig:convergence}}
	\caption{\footnotesize  We conduct sensitivity analysis on our \ourmeth by varying the number of layers and sampling.  In Fig.~\ref{fig:convergence}, we present the convergence curve on our \ourmeth and four strong baselines.}
	\label{fig: ABLATION_CONVERGENCE}
	\vspace{-0.1in}
%    \hspace{1}
\end{figure*}

Here, we conduct extensive experiments to address the following questions: 
{(1)} \textit{ How does our \ourmeth perform comparing to SOTA GNN-based methods? } 
{(2)} \textit{ Why does our \ourmeth work better than other GNN-based methods?} 
{(3)} \textit{ How does our \ourmeth perform comparing to heuristic CO-based methods?} 
{(4)} \textit{ What is the effect of our \ourmeth on the hyperparameters?}
{(5)} \textit{ What is the convergence behavior of our \ourmeth? }
Sec.~\ref{sec:main_results}-Sec.~\ref{subsec:convergence} answer the above questions accordingly.

%\vsa 
\subsection{Experimental Settings}
\label{sec:settings}
{\textbf{Datasets and Experimental Setup.}}
We implement our experiments on both synthetic and real-world datasets, which are collected from a large variety of applications.
We aim to obtain pairs of query and target graphs, along with labels indicating whether a query is isomorphic to the target.
We first generate synthetic data by utilizing ER-random graphs and WS-random graphs\eat{as in}~\citep{Rex2020NeuroMatch}. 
We keep edge densities the same in both positive and negative samples to ensure consistency in the distribution. 
This balance avoids potential biases during learning.
For the real-world data, we follow the setting in~\citep{Rex2020NeuroMatch}, including 
Cox2, Enzymes, Proteins, IMDB-Binary, MUTAG, Aids, and FirstMMDB. %\textcolor{blue}{ We also conduct additional experiments on the Open Graph Benchmark datasets\citep{hu2020ogb}  and three datasets with continuous features in Appendix \ref{subsec: more experiments}.}
Please refer to Appendix \ref{subsec: more experiments} for additional experiments, such as Open Graph Benchmark datasets\citep{hu2020ogb}.

%We also conduct additional experiments on the Open Graph Benchmark datasets\citep{hu2020ogb}  and three datasets with continuous features in Appendix \ref{subsec: more experiments}.

We employ these raw graphs as target graphs and generate the positive query graphs using random breadth-first search sampling from the target graphs.
The negative query graphs are randomly generated.
% \textcolor{purple}{AND GENERATE NEGATIVE QUERY GRAPHS BY RANDOM GRAPH GENERATION}
Similar to the synthetic data, we require the edge density in both positive and negative samples to be as close as possible. % \todo{ratio to be 1?}
% 这里的sample方法可以引用\cite{Rex2020NeuroMatch},我们的采样方法抄的他的文章
We split each dataset into training and testing at a ratio of $4:1$ and report the average classification accuracy under the five-fold cross-validation.  \eat{, where the ratios for training and testing are $80\%$ and $20\%$, respectively.}  
%In the result, we report the averaged classification accuracy from five folds.

{\textbf{Baselines.}}
For a fair comparison, we select the following SOTA GNN-based competitors: 
SimGNN~\citep{Bai2019SimGNN}, 
NeuralMatch~\citep{Rex2020NeuroMatch}, 
IsoNet~\citep{Roy2022IsoNet}, 
GMN-embed~\citep{Li2019GMN},
%（Learning-Based Efficient Graph Similarity Computation via Multi-Scale Convolutional Set Matching）,
GraphSim~\citep{Bai2020GraphSim}, and 
GOT-Sim~\citep{Doan2021GOTSim}.
These all incorporate graph neural networks into subgraph matching. 
For combinatorial optimization metheds, we choose GQL~\citep{he2008GraphQL}, QSI~\citep{Shang2008QuickSI}, CECI~\citep{Bhattarai2019CECI} and set-intersection method LFTJ~\citep{Sun2020HeuristicCompare}. These are traditional methods that can find exact solutions.

%\vsa 
\subsection{Main Results}
\label{sec:main_results}
\eat{We summarize the overall performance in Table~\ref{tab:overall}. We report the best results of each model in the best possible settings, e.g., we run each model up to $500$ epochs.}

Table~\ref{tab:overall} reports the overall performance of all compared models, where each model achieves the best results in all possible settings up to $500$ epochs.  The accuracy of GOT-Sim and IsoNet on FirstMMDB is omitted due to exceeding time and memory.  We observe that
\begin{compactitem}[-]
\item For the synthetic dataset, the overall performance is much lower than that in the real-world datasets.  A reason is that the synthetic dataset is more complicated, e.g., with a higher edge density, than real-world datasets, which makes the matching more challenging.  
By examining more details, IsoNet, GMN-embed, GraphSim, and GOT-Sim attempt to employ a node-level assignment matrix to capture matching between graphs, which underestimates the importance of global structure.  They can only yield around $50\%$ accuracy.  SimGNN and NeuroMatch try to learn the global representation and attain the accuracy of $70.5\%$ and 65.7\%, respectively.  %By explicitly modeling subtrees in the graph, \ourmeth can further capture the maximum expressiveness and thus achieves the best accuracy.} % \todo{what is the meaning of structural information? rewrite }
\item For the real-world datasets, \ourmeth attains superior performance and beats all baselines.  Among seven real-world datasets, \ourmeth attains $100\%$ accuracy in four datasets, i.e., Protein, Mutag, Cox2, and FirstMMDB, while at least $99.5\%$ in Enzymes and Aids, and $93.3\%$ in IMDB-Binary.  
\item Overall, \ourmeth has explicitly modeled subtrees and consistently attained the best performance among all compared methods.  The promising results confirm our theoretical analysis. % where \ourmeth has the maximum expressibility of GNNs by explicitly modeling subtrees. 

\end{compactitem} %\vsa 

%To demonstrate the scalability and efficiency of our \ourmeth, we also conduct comparisons with exact methods and provide the results in Appendix~\ref{sec: exact compare}.
%Note that we skip the result of GOT-Sim and IsoNet of FirstMMDB for exceeding time and memories.
\vsa
\begin{table}[!h]
\footnotesize
\caption{ Time comparison with CO-based metheds(seconds). }
    \scalebox{1.0}{
    \centering
    \footnotesize
    
    \begin{tabular}{c|ccccc}
    \toprule
            & Proteins & Mutag & Enzymes & Aids & Cox2 \\
        \midrule
              GQL & 3.82  & 2.05  & 2.98 & 6.94 & 3.34 \\
              QSI & 10.37  & \textbf{0.16}  & 4.63 & 19.67 & 3.22 \\
              CECI & 19.74  & 0.27  & 13.74 & 19.15 & 6.19 \\
              LFTJ & 9.93  & 0.21  & 4.61 & 20.46 & 5.69 \\
       \midrule
              \ourmeth & \textbf{1.12}  & 1.04  & \textbf{1.18} & \textbf{1.88} & \textbf{1.68} \\
        \bottomrule
    \end{tabular}
    }
    \label{tab:co_runtime_compare}
\end{table} 
\vsa\vsa
\begin{table}[!h]
\footnotesize
\caption{ Efficiency comparison with CO-based metheds.  }
    \scalebox{1.0}{
    \centering
    \footnotesize
    
    \begin{tabular}{c|ccccc}
    \toprule
            & Proteins & Mutag & Enzymes & Aids & Cox2 \\
        \midrule
              GQL & 96.2  & 100.0  & \textbf{100.0} & \textbf{100.0} & 100.0 \\
              QSI & 96.8  & 100.0  & 99.8 & 94.8 & 100.0 \\
              CECI & 91.2  & 100.0  & 99.4 & 91.0 & 100.0 \\
              LFTJ & 87.0  & 100.0  & 89.8 & 95.8 & 100.0 \\
       \midrule
              \ourmeth & \textbf{100.0}  & \textbf{100.0}  & 99.9 & 99.5 & \textbf{100.0} \\
        \bottomrule
    \end{tabular}
    }
    \label{tab:co_efficiency_compare}
\end{table} 

\subsection{Benefit of Our Core Design: The Subtree Matching}

The following experiments verify how GNNs deployed in our \ourmeth function differently in existing GNN-based subgraph matching methods.  That is, our \ourmeth utilizes GNNs to explicitly model subtrees while existing methods optimize the graph representations via memorizing the data distribution divergence.  We additionally construct new datasets and denote them with $^*$.  The new datasets exclude the data distribution effect by the following steps: following the same way to generate the positive samples as in Sec.~\ref{sec:settings}, continuing to perform edge dropping and insertion on the clipped graphs together to obtain the final negative samples.  This strategy aims to make sure the generated samples follow almost the same distribution in terms of edges.  For the synthetic dataset, given that the positive and negative data in the original dataset are generated randomly and following the same distribution, we construct another synthetic dataset without following the same density distribution for better comparison.
Since SimGNN and NeuoMatch are the best-performing GNN-based methods in Table~\ref{tab:overall}, we compare our \ourmeth with them.  

Results in Table~\ref{tab:overal_abl_subtree} show that \ourmeth outperforms SimGNN and NeuoMatch by a much larger margin, achieving $2.5\%-33.2\%$ improvement.  
This phenomenon aligns with our hypothesis that the gain of other learning-based methods is distribution-dependent, which results in a significant performance drop on evenly-distributed data.
Moreover, the overall performance on Synthetic$^+$ is much better than that on Synthetic.  This implies that data following non-even distribution will make the matching much easier.  This is in line with the results in Table~\ref{tab:overall}, i.e., the GNN-based methods tend to capture the distribution divergence rather than performing matching.

\subsection{Comparison with exact combinatorial solutions}

Before detailing the comparison between \ourmeth and CO-based methods, we would like to emphasize that our method and CO-based heuristics are driven by different purposes and are therefore suitable for different scenarios. Heuristic algorithms can find exact solutions, but the worst-case time complexity is exponential, and they are generally suitable for scenarios with high matching accuracy requirements, such as graph databases. Our method is suitable for efficient machine learning tasks with time constraints.
To verify the advantages of our method, we compared it from three perspectives:

\textbf{Solution time:} Since heuristic algorithms can find exact solutions, the metric of accuracy is no longer distinguishable. Therefore, we mainly compared our algorithm and heuristic algorithms in terms of the average running time on the original dataset. We report the average running time of our method and heuristic algorithms on the test set, as shown in Table~\ref{tab:co_runtime_compare}. The results demonstrate that our method not only has a consistently stable running time, but it is also significantly faster than heuristic algorithms on most datasets, often by an order of magnitude.

\textbf{Solution efficiency:} To compare the efficiency of different solutions, we compared the success rates of heuristic solvers to the accuracy of our method, shown in Table~\ref{tab:co_efficiency_compare}. The success rate is defined as the proportion of samples that an algorithm completes within 10 seconds per sample on the original dataset. Considering the constraint of execution time, our method's accuracy and heuristic methods' success rate are comparable to those based on heuristic algorithms for most real datasets.

\textbf{Scalability:} 
To validate the scalability of our model, we conducted experiments using larger datasets. Specifically, we used the "firstmm" dataset consisting of 50,000 nodes, as well as the "DD" dataset consisting of 300,000 nodes and one million edges. We compared our method with these datasets in terms of runtime and success rate.
As shown in Table~\ref{tab:co_scalability_compare}, our method demonstrates better efficiency than heuristic algorithms on large-scale graphs. On the DD dataset, our method achieves an accuracy rate of nearly 100\%. 
On the firstmm dataset, our method has a relatively low accuracy rate, but it is comparable to some heuristic algorithms. These results suggest that our method is scalable.

\vsa
\begin{table}[!h]
\footnotesize
\caption{ Results of experimenting the scalability. }
    \scalebox{1.1}{
    \centering
    \footnotesize
    
    \begin{tabular}{c|cc|cc}
    \toprule
            & \multicolumn{2}{c|}{Time(seconds)} & \multicolumn{2}{c}{Success rate(\%)} \\
        \midrule
            & DD & Firstmm & DD &  Firstmm \\
        \midrule
              GQL & 58.49  & 149.15  & \textbf{100.0} & 60.0 \\
              QSI & 80.18  & 127.21  & 94.0 & 60.4 \\
              CECI & 128.93  & 163.73  & 48.4 & 69.6 \\
              LFTJ & 80.36  & 89.85  & 92.8 & \textbf{77.2} \\
       \midrule
              \ourmeth & \textbf{46.61}  & \textbf{32.62}  & 99.2 & 60.0 \\
        \bottomrule
    \end{tabular}
    }
    \label{tab:co_scalability_compare}
\end{table}

Furthermore, we included the success rates of our heuristic methods on synthetic datasets with varying numbers of nodes. This demonstrates the scalability of our method on synthetic data, as shown in Table~\ref{tab:co_syn_scalability_compare}.
In general, the success rate of heuristic methods decreases rapidly as the scale increases. However, our method has consistently maintained an accuracy of around 70\%.

\vsa
\begin{table}[!h]
\footnotesize
\caption{ Results of synthetic datasets. }
    \scalebox{1.1}{
    \centering
    \footnotesize
    
    \begin{tabular}{c|cccccc}
    \toprule
              Nodes & 20 & 40 & 60 & 80 & 100 & 200\\
        \midrule
              GQL & \textbf{75.0}  & \textbf{81.8}  & \textbf{78.0} & 63.2 & 56.8 & 32.0 \\
              QSI & 70.4  & 61.6  & 19.4 & 12.2 & 11.2 & 2.6 \\
              CECI & 66.0  & 51.2  & 13.8 & 12.4 & 15.0 & 16.8 \\
              LFTJ & 60.2  & 71.4  & 25.6 & 15.8 & 12.0 & 5.2 \\
       \midrule
              \ourmeth & 70.0  & 70.2  & 74.0 & \textbf{63.4} & \textbf{71.0} & \textbf{67.0} \\
        \bottomrule
    \end{tabular}
    }
    \label{tab:co_syn_scalability_compare}
\end{table}

\subsection{Sensitivity Analysis}

\textbf{Effect of $L$, the depth of a subtree.} 
We test the effect of the depth of a subtree, i.e., the number of the hidden layers, and change it from 1 to 7.  Results in Fig.~\ref{fig:abs_layer} show that \ourmeth reaches its best performance when the number of layers is 6.  \ourmeth only needs a few layers to achieve decent performance, suggesting it can scale up to large size graphs.

\textbf{Effect of $K$, the times of samples.}  
Intuitively, sampling more data to train the model will yield better performance.  We vary $K$ from 1 to 7 and show the results in Fig.~\ref{fig:abs_samples}.  Surprisingly, the results show that by sampling five times, we can obtain the best performance on all datasets.  This demonstrates that \ourmeth can attain decent performance at a low computation cost.

We also ablate \ourmeth with circles and node attributes at different settings, and all experiments show results consistent with our theoretical analysis. Please refer to Appendix~\ref{subsec:ablation} for more details.
 
% We also ablate \ourmeth with different settings of circles and node attributes, all experiments show results consistent with out theoretical analysis.
% Please refer to
%  Appendix~\ref{subsec:ablation} for more details.

\subsection{Convergence Analysis}
\label{subsec:convergence}
Figure~\ref{fig:convergence} provides the training loss of \ourmeth and four baselines on Synthetic, where we only select baselines with the same loss functions as ours, such as MSE or CE, for a fair comparison. 
The results show that (1) \ourmeth converges the fastest due to its power of explicitly modeling the subtrees. (2) NeuroMatch and SimGNN perform matching through learning graph-level representations, which need more epochs to converge for capturing the local structure. (3) GOTSim and GraphSim attain the lowest loss in the beginning but show the weakest convergence ability compared to others because they can only capture the node-level representations and fail to learn meaningful subgraph matching.  Consequently, they yield the worst performance as reported in Table~\ref{tab:overall}.

% \textcolor{blue}{ We present the comparison with exact methods in Appendix \ref{sec: exact compare}. }

%\vsa 
\section{Conclusion and Future work}
%\vsa 
{
In this paper, we propose \ourmeth for subgraph matching, which degenerates the subgraph matching problem into perfect matching in a bipartite graph and proves that the matching procedure can be implemented via the built-in tree-structure aggregation on GNNs, which yields linear time complexity.  We also incorporate circle structures and node attributes to boost the matching performance.  Finally, we conduct extensive experiments to show that \ourmeth achieves significant improvement over competitive baselines and indeed exploits subtrees for the matching, which is different from existing GNN-based methods for memorizing the data distribution divergence. 
%, making the explicit modeling of subgraph matching possible while maintaining efficiency.
%In addition, our work models the circle structure in matching in a learning-based framework.
%Our experiments find that \ourmeth achieves significant advantages over other competitive baselines.  Theoretical justifications have been provided to articulate that \ourmeth explicitly models subtrees, which differs from existing learning-based methods on memorizing the data distribution divergence. 

\ourmeth can be further explored in several promising directions.  First, we can investigate more degeneracy mechanisms to tackle more complicated graphs.  
Second, we can exploit other information, e.g., positional encoding,  to boost the model performance. 
Third, we can extend our \ourmeth to more real-world applications, e.g., document matching, to investigate its capacity.  %Third,  \citep{Wang2022Positional}
}

\section{Acknowledgement}
%The work was partially supported by the National Nature Science Foundation of China (No. 62201576).

The work was fully supported by the IDEA Information and Super Computing Centre (ISCC) and was partially supported by the National Nature Science Foundation of China (No. 62201576), the National Key Research and Development Program of China (No. 2020YFB1708200),  the "Graph Neural Network Project" of Ping An Technology (Shenzhen) Co., Ltd. and AMiner.Shenzhen SciBrain fund.

% In the unusual situation where you want a paper to appear in the
% references without citing it in the main text, use \nocite
%\nocite{langley00}

\bibliography{icml2023}
\bibliographystyle{icml2023}

%%%%%%%%%%%%%%%%%%%%%%%%%%%%%%%%%%%%%%%%%%%%%%%%%%%%%%%%%%%%%%%%%%%%%%%%%%%%%%%
%%%%%%%%%%%%%%%%%%%%%%%%%%%%%%%%%%%%%%%%%%%%%%%%%%%%%%%%%%%%%%%%%%%%%%%%%%%%%%%
% APPENDIX
%%%%%%%%%%%%%%%%%%%%%%%%%%%%%%%%%%%%%%%%%%%%%%%%%%%%%%%%%%%%%%%%%%%%%%%%%%%%%%%
%%%%%%%%%%%%%%%%%%%%%%%%%%%%%%%%%%%%%%%%%%%%%%%%%%%%%%%%%%%%%%%%%%%%%%%%%%%%%%%
\newpage
\appendix
\onecolumn
\newpage
\section{Appendix}
\subsection{The Pseudo-Code of \ourmeth}\label{app:code}
The pseudo-code of \ourmeth is outlined as follows:
\begin{algorithm}[hb]
\footnotesize
    \caption{ The \ourmeth algorithm}\label{alg:algo}
\begin{algorithmic}[1]
\REQUIRE {Query graph $G_\gQ(V_\gQ,E_\gQ)$ with node attributes $X_\gQ$, target graph $G_\gT(V_\gT,E_\gT)$ with node attributes $X_\gT$, iteration number: $L$, sample number: $K$. }
\ENSURE{Is $G_\gQ$ isomorphic to $G_\gT$}
\STATE $G_\gQ(V_\gQ,E_\gQ)\gets ChordlessCycleAugment(G_\gQ)$; \\
$G_\gT(V_\gT,E_\gT)\gets ChordlessCycleAugment(G_\gT)$;
\STATE $H_\gQ^{(0)}= X_\gQ\in\mathbb{R}^{|V_\gQ|\times D}$; $H_\gT^{(0)}= X_\gT\in\mathbb{R}^{|V_\gT|\times D}$; 
\STATE $S^{(0)} = InitialAssignMatrix(X_\gT,X_\gQ) \in \mathbb{R}^{|V_\gT|\times |V_\gQ|}$
\FOR{$l = 0, 1 ..., L-1$}
  \FOR{$k = 0,1,...,K$ }
    \STATE 
    $\tilde{A}_\gQ^{(k)} = DropEdge(A_\gQ)\in\mathbb{R}^{|V_\gQ|\times |V_\gQ|}$; 
    \STATE Calculate $\Phi^{(l+1)}(\tilde{A}_\gQ^{(k)}, A_\gT)$ according to Eq.~(\ref{eq:Phi})
 \ENDFOR
   % end of inner loop 
    \STATE $ S^{(l+1)}_{subtree}=\bigodot_{k=0}^{K}{\Phi^{(l+1)}(\tilde{A}_\gQ^{(k)},A_\gT)}\in\mathbb{R}^{|V_\gT|\times|V_\gQ|}$;
    
    \STATE $H^{(l+1)}_\gT=GNN^{(l)}_\gT(A_\gT,concat[H_\gT^{(l)},MLP(S^{(l)})])\in\mathbb{R}^{|V_\gT|\times d}$; 
    \STATE $H^{(l+1)}_\gQ=GNN^{(l)}_\gQ(A_\gQ,concat[H_\gQ^{(l)},MLP((S^{(l)})^T)])\in\mathbb{R}^{|V_\gQ| \times d}$;
    % \State $H^{(l+1)}_\gQ=GNN^{(l)}_\gQ(A_\gQ,[H_\gQ^{(l)},(S^{(l)}_{subtree})^T])$; 
    \STATE Compute $S^{(l+1)}_{gnn} \in \mathbb{R}^{|V_\gT|\times |V_\gQ|}$  according to Eq.\ref{eq:gnn_s}
    \STATE $S^{(l+1)}=S_{gnn}^{(l+1)}\odot S_{subtree}^{(l+1)}$;
    % \State $l\gets l+1$;
\ENDFOR % end of out loop
\STATE $result = CheckAssign(S^{(L)})$
\end{algorithmic}
\end{algorithm}

\subsection{Main Results}
\begin{theorem}
Given a target graph $G_\gT(V_\gT,E_\gT)$ and a query graph $G_\gQ(V_\gQ,E_\gQ)$, 
if $G_\gQ \subset G_\gT$, and the subtree generation function $\Psi$ as defined in Eq.~(\ref{eq:tree_generation}) meets the following  condition: 
%\todo{what does consistency mean?}
\begin{equation}
 %\footnotesize
\eat{\forall graph\ (G_\gS, G_\g), if\  G_\gS\subset G_\g, then\ \Psi(G_\g)\subset \Psi(G_\gS),}
\forall \ \mbox{graph pair} \ (G_\gS, G), \mbox{if}\  G_\gS\subset G, \mbox{then}\ \Psi(G_\gS)\subset \Psi(G),
\end{equation}
then the subgraph isomorphic mapping  $\xi\!:\!V_\gQ\to V_\gT$ ensures the $l$-hop subtrees of the subgraph is isomorphic to the subtrees of the corresponding subgraph: 
\begin{equation}
 %\footnotesize
        % \forall l\ge1,v\in V_\gQ , \quad always \quad have \quad T_v^l\subset T_{f(v)}^l
         \forall l\ge1,\ervq\in V_\gQ,  \ervt = \xi(\ervq)\in V_\gT \Rightarrow  T_\ervq^{(l)}\subset T_{\ervt}^{(l)},  % \mbox{ is always true.}\\
        %  & generation function F_T
\end{equation}
\label{theo:necessary_app}
\end{theorem}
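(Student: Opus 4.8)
\textbf{Proof proposal for Theorem~\ref{theo:necessary_app}.}

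The plan is to prove the statement by induction on the depth $l$ of the subtree, exploiting two facts: that the subgraph relation $G_\gS\subset G$ is witnessed by an injective vertex map preserving adjacency, and that the hypothesis on $\Psi$ says exactly that this map lifts to a subtree embedding. First I would fix the subgraph isomorphic mapping $\xi\!:\!V_\gQ\to V_\gT$ guaranteed by $G_\gQ\subset G_\gT$, and for an arbitrary node $\ervq\in V_\gQ$ set $\ervt=\xi(\ervq)$. The key observation to record up front is that $\xi$ restricts to an isomorphism between the $l$-hop induced subgraph $Sub_\ervq^{(l)}$ around $\ervq$ in $G_\gQ$ and a \emph{subgraph} of the $l$-hop induced subgraph $Sub_\ervt^{(l)}$ around $\ervt$ in $G_\gT$: indeed, any vertex within $l$ hops of $\ervq$ maps under $\xi$ to a vertex within $l$ hops of $\ervt$ (a path of length $\le l$ from $\ervq$ maps to a path of length $\le l$ from $\ervt$ since $\xi$ preserves edges), and $\xi$ is injective and adjacency-preserving, so $Sub_\ervq^{(l)}\subset Sub_\ervt^{(l)}$.

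Next I would simply invoke the hypothesis on $\Psi$ with the graph pair $(G_\gS,G)=(Sub_\ervq^{(l)}, Sub_\ervt^{(l)})$: since $Sub_\ervq^{(l)}\subset Sub_\ervt^{(l)}$, we get $\Psi(Sub_\ervq^{(l)})\subset\Psi(Sub_\ervt^{(l)})$, which by the definition in Eq.~(\ref{eq:tree_generation}) is precisely $T_\ervq^{(l)}\subset T_\ervt^{(l)}$. This already gives the conclusion for every $l\ge1$ and every $\ervq$, so in fact a direct argument (no induction needed) suffices provided one is careful that the trees are rooted at the matched nodes — which is why I would phrase $\Psi$ as producing a \emph{rooted} subtree and note that the subgraph isomorphism $Sub_\ervq^{(l)}\hookrightarrow Sub_\ervt^{(l)}$ sends the root $\ervq$ to the root $\ervt$, so the induced subtree embedding is root-preserving as well.

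The main obstacle, and the step I would be most careful about, is the claim $Sub_\ervq^{(l)}\subset Sub_\ervt^{(l)}$ as an \emph{induced-subgraph} (or at least subgraph) relation that $\Psi$ can act on: one must check that $\xi$ maps the $l$-ball around $\ervq$ \emph{into} the $l$-ball around $\ervt$ (distance non-increasing under $\xi$, which follows from edge preservation) and that no spurious collapsing occurs (injectivity of $\xi$). A secondary subtlety is whether $T_\ervq^{(l)}$ depends on the ambient graph in which the $l$-ball sits: since $\Psi$ is applied to $Sub_\ervq^{(l)}$ as a standalone graph, the WL-subtree construction is well-defined and its uniqueness (cited after Eq.~(\ref{eq:tree_generation})) guarantees the result does not depend on extraneous context. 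Once these points are nailed down, the theorem follows immediately from the assumed monotonicity property of $\Psi$, so I would spend essentially all the writing effort on the ball-containment lemma and treat the rest as a one-line application.
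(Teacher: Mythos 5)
Your proposal is correct and follows essentially the same route as the paper's proof: both establish that the $l$-hop neighborhood subgraph of $\ervq$ embeds (via $\xi$) into the $l$-hop neighborhood subgraph of $\ervt$ — the paper does this through the intermediate image graph $G_{S_\ervq^{(l)}}$ while you argue the ball containment directly — and then conclude by applying the assumed monotonicity of $\Psi$. Your added care about root preservation and distance non-increase is a welcome tightening of a step the paper treats more informally, but it is the same argument.
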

\begin{proof}
According to the definition of subgraph matching~\citep{McCreesh2018SubIso},  when $G_\gQ$ is a subgraph of $G_\gT$, there must exists an injective function $\xi$ : $V_\gQ\to V_\gT$, such that $\forall \ervq_i, \ervq_j \in V_\gQ, (\ervq_i, \ervq_j) \in E_\gQ \Rightarrow (\xi(\ervq_i),\xi(\ervq_j)) \in E_\gT$. 
For any subgraph in the query graph, e.g., $S(V_S,E_S)\in G_\gQ$, we always have a subgraph in the original graph $G_\gT$, denoted as $G_S (V_G,E_G)$, that corresponds to the set of the query node as $V_G=\xi(V_S)$. This tells us that $S\subset G_S$. % Individually, co
{According to this,} consider any given node from $V_\gQ$: $\ervq\in V_\gQ$, $S_\ervq^{(l)}$ is a subgraph of $G_\gQ$ and its image $G_{S_\ervq^{(l)}}$ in $G_\gT$, i.e. $ S_\ervq^{(l)} \subset G_{S_\ervq^{(l)}} $.
By definition, the node in $S_\ervq^{(l)}$ or $G_{S_\ervq^{(l)}}$ is at most $l$-hop from node $\ervq$ or $\ervt = \xi(\ervq)$, we know that $G_{S_\ervq^{(l)}}$ must be a subgraph of $S_{\ervt}^{(l)}$, i.e.,$G_{S_\ervq^{(l)}} \subset S_{\ervt}^{(l)}$.
Put all together, we have  $S_\ervq^{(l)} \subset G_{S_\ervq^{(l)}} \subset S_{\ervt}^{(l)}$. Based on the listed constrain, we then have $T_\ervq^{(l)}\subset T_{\ervt}^{(l)}$.

\end{proof}

\eat{
\begin{theorem}
Given a node $b$ in the query graph and a node $a$ in the target graph, the following three conditions are equivalent:\\ 
1) $T_b^{(l+1)}\subset T_a^{(l+1)}$;
%, where $l$ denotes $l$-hop of a given node; $T_a^{(l+1)}$ and $T_b^{(l+1)}$ represent the subtrees whose roots are node $a$ and $b$, respectively. 
\\
2) There exists an injective function on the neighborhood of these nodes as $f:N(b)\to N(a)$, \st $\forall b_i\in N(b), a_i=f(b_i), T_{b_i}^{(l)}\subset T_{a_i}^{(l)}$. \\
3) There exists a perfect matching on the bipartite graph $B(V_{ab}, E_{ab})$, where $V_{ab} = N(a) \cup N(b)$ and $\forall a_i\in N(a), b_i\in N(b), (a_i, b_i)\in E_{ab}$ if and only if $T_{b_i}^{(l)}\subset T_{a_i}^{(l)}$.
\label{theo:recursive_equal_app}
\end{theorem}
\todo{make this same in the main paper}
}
\begin{theorem}
Given a node $\ervq$ in the query graph and a node $\ervt$ in the target graph, the following three conditions are equivalent:
\begin{compactenum}[1)]
\item $T_\ervq^{(l+1)}\subset T_\ervt^{(l+1)}$.
%, where $l$ denotes $l$-hop of a given node; $T_a^{(l+1)}$ and $T_b^{(l+1)}$ represent the subtrees whose roots are node $a$ and $b$, respectively. 
\item There exists an injective function on the neighborhood of these nodes as $f\!:\!N(\ervq)\to N(\ervt)$, \mbox{s.t.} $\forall \ervq_i\in N(\ervq), \ervt_i=f(\ervq_i), T_{\ervq_i}^{(l)}\subset T_{\ervt_i}^{(l)}$. 
% \todo{$i$ is not defined. }
\item  There exists a perfect matching on the bipartite graph $B^{(l)}(N(\ervt), N(\ervq), E)$, where $\forall \ervt_j\in N(\ervt), \ervq_i\in N(\ervq), (\ervt_j, \ervq_i)\in E$ if and only if $T_{\ervq_i}^{(l)}\subset T_{\ervt_j}^{(l)}$.
\end{compactenum}
\label{theo:recursive_equal_app}
\end{theorem}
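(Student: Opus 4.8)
The plan is to establish the cycle of implications $(1)\Rightarrow(2)\Rightarrow(3)\Rightarrow(1)$, exploiting the recursive structure of the WL subtree $T_\ervt^{(l+1)}$, namely that its root $\ervt$ has children whose rooted subtrees are exactly $\{T_{\ervt_j}^{(l)}: \ervt_j\in N(\ervt)\}$, and similarly for $\ervq$.

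\emph{Step 1: $(1)\Rightarrow(2)$.} Suppose $T_\ervq^{(l+1)}\subset T_\ervt^{(l+1)}$, witnessed by a subtree embedding $\psi$. Since the WL subtree is rooted and balanced, and since the roots are the unique nodes at depth $0$, a subtree isomorphism must map the root of $T_\ervq^{(l+1)}$ to the root of $T_\ervt^{(l+1)}$ (otherwise depths would not be preserved, contradicting that $T_\ervq^{(l+1)}$ reaches depth $l+1$). Hence $\psi$ restricts to an injection from the children of $\ervq$ (i.e.\ $N(\ervq)$) into the children of $\ervt$ (i.e.\ $N(\ervt)$); call it $f$. For each $\ervq_i\in N(\ervq)$, $\psi$ also maps the entire depth-$l$ subtree hanging below $\ervq_i$ into the depth-$l$ subtree hanging below $f(\ervq_i)=\ervt_i$, which is precisely the statement $T_{\ervq_i}^{(l)}\subset T_{\ervt_i}^{(l)}$. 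This gives condition 2.

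\emph{Step 2: $(2)\Leftrightarrow(3)$.} An injective $f\!:\!N(\ervq)\to N(\ervt)$ with $T_{\ervq_i}^{(l)}\subset T_{f(\ervq_i)}^{(l)}$ is, by the very definition of the edge set $E$ of $B^{(l)}(N(\ervt),N(\ervq),E)$, exactly a matching in $B^{(l)}$ that saturates $N(\ervq)$; since $|N(\ervq)|\le|N(\ervt)|$ in the relevant cases (and in general a $N(\ervq)$-saturating matching is what "perfect matching" means here in the sense of Theorem~\ref{theo:Hall}), this is the same object as a perfect matching on the bipartite graph. So conditions 2 and 3 are literally reformulations of one another; I would spell this out in one or two lines, invoking Theorem~\ref{theo:Hall} only if one wants the Hall-condition characterization rather than the matching itself.

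\emph{Step 3: $(2)\Rightarrow(1)$.} Given $f$ and subtree embeddings $\psi_i: T_{\ervq_i}^{(l)}\hookrightarrow T_{f(\ervq_i)}^{(l)}$ for each $\ervq_i\in N(\ervq)$, build a map on $T_\ervq^{(l+1)}$ by sending its root $\ervq$ to $\ervt$ and, on the subtree below each child $\ervq_i$, using $\psi_i$. Injectivity overall follows because $f$ is injective (so distinct children go to distinct subtrees, which are vertex-disjoint below the root in the WL subtree) and each $\psi_i$ is injective; adjacency is preserved since root-to-child edges map to root-to-child edges and everything else is handled within the $\psi_i$. Hence $T_\ervq^{(l+1)}\subset T_\ervt^{(l+1)}$.

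\emph{Main obstacle.} The delicate point is Step 1: arguing rigorously that any subtree isomorphism between these WL subtrees must be root-preserving and therefore decomposes cleanly over the children. This needs the rooted/layered structure of the WL subtree (all leaves of $T_\ervq^{(l+1)}$ at depth $l+1$, balancedness), so that depth is an invariant the embedding must respect; one also has to be careful that $T_{\ervq_i}^{(l)}$ as a "child subtree" really is the WL subtree of $\ervq_i$ at depth $l$ — this is where the uniqueness property of the WL construction (cited from \citet{Xu2018}) is essential, since it guarantees the child subtrees are well-defined independent of traversal order. I would handle a subtle degenerate case separately: if $N(\ervq)=\emptyset$, all three conditions hold trivially (the empty matching is perfect, and $T_\ervq^{(l+1)}$ is a single node that embeds into anything). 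The remaining implications are essentially bookkeeping once the root-preservation in Step 1 is nailed down.
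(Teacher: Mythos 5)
Your proposal is correct and follows essentially the same route as the paper's proof: decompose the $(l+1)$-depth WL subtree at its root into the disjoint $l$-depth child subtrees, identify the injective child assignment of condition 2 with a perfect matching on the bipartite graph (a direct reformulation), and glue the child embeddings back together for the converse using disjointness of the child subtrees. Your Step 1 is in fact slightly more careful than the paper's, which implicitly assumes the embedding is root-preserving without the depth/balancedness argument you supply.
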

% \begin{proof}
We prove this theorem by introducing the following two theorem. Theorem~\ref{theo:recursive_equal} shows that condition 1) is equivalent to condition 2), i.e. the WL subtree isomorphism test can be accomplished in a recursive manner then prove Theorem.~\ref{theo:bipartite} 
that the condition 2) equals to condition 3) which means every iteration in the recursive process equals to examine the existence of a perfect matching, respectively. \eat{\todo{This part is unreadable.}}

% 4.2变成一个迭代的形式； 4.3 每次迭代都是一个二部图。
\begin{theorem}
Given a node $\ervq$ in the query graph and a node $\ervt$ in the target graph, the following two conditions are equal:\\ 
1) $T_\ervq^{(l+1)}\subset T_\ervt^{(l+1)}$, where $l$ is an integer and  $l\ge 1$.\\
2) There exists an injective function on the neighboring set of these nodes as $f\!:\!N(\ervq)\to N(\ervt), \mbox{s.t.} \forall \ervq_i\in N(\ervq), \ervt_i=f(\ervq_i), T_{\ervq_i}^{(l)}\subset T_{\ervt_i}^{(l)}$. 
\label{theo:recursive_equal}
\end{theorem}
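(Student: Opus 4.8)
\textbf{Proof proposal for Theorem~\ref{theo:recursive_equal}.}

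The plan is to prove the two implications separately, using the recursive structure of WL subtrees: $T_v^{(l+1)}$ is, up to isomorphism, the rooted tree whose root is $v$ and whose children subtrees are exactly the collection $\{T_u^{(l)} : u \in N(v)\}$ (this is precisely the recursive description of the WL subtree / the unrolling of $l+1$ GNN aggregations). So I would first state this recursive decomposition as the structural fact we rely on, then argue that a subtree-isomorphic embedding of $T_\ervq^{(l+1)}$ into $T_\ervt^{(l+1)}$ must send the root $\ervq$ to the root $\ervt$ (because the root is the unique node at depth $0$, i.e.\ at maximal distance $l+1$ from all leaves; more carefully, a rooted-tree embedding by definition preserves the root, and the subtree relation $\subset$ here is the rooted/ordered-by-depth one used throughout the paper).

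For the direction $2)\Rightarrow 1)$: given an injective $f:N(\ervq)\to N(\ervt)$ with $T_{\ervq_i}^{(l)}\subset T_{f(\ervq_i)}^{(l)}$ for every $\ervq_i\in N(\ervq)$, I would build the embedding of $T_\ervq^{(l+1)}$ into $T_\ervt^{(l+1)}$ by mapping the root $\ervq\mapsto \ervt$ and then, on the $i$-th child branch, using the embedding of $T_{\ervq_i}^{(l)}$ into $T_{f(\ervq_i)}^{(l)}$ witnessing the hypothesis. Injectivity of $f$ guarantees that distinct child branches of $\ervq$ land in distinct child branches of $\ervt$, so the branch embeddings do not collide and glue into a single well-defined injective tree homomorphism respecting the root — i.e.\ $T_\ervq^{(l+1)}\subset T_\ervt^{(l+1)}$.

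For the direction $1)\Rightarrow 2)$: take an embedding $\varphi$ witnessing $T_\ervq^{(l+1)}\subset T_\ervt^{(l+1)}$. As noted, $\varphi$ maps the root $\ervq$ to the root $\ervt$, hence maps each child $\ervq_i$ of $\ervq$ to some child of $\ervt$; define $f(\ervq_i)$ to be that child. Since $\varphi$ is injective and the child branches of $\ervt$ are vertex-disjoint subtrees hanging off the root, two distinct children $\ervq_i\ne\ervq_j$ cannot be sent into the same child branch of $\ervt$ (that branch is a tree and $\varphi$ restricted to the disjoint branches of $\ervq_i,\ervq_j$ would have to be injective into it while both images contain the branch root), so $f$ is injective; and the restriction of $\varphi$ to the branch of $\ervq_i$ is exactly an embedding of $T_{\ervq_i}^{(l)}$ into $T_{f(\ervq_i)}^{(l)}$, giving $T_{\ervq_i}^{(l)}\subset T_{f(\ervq_i)}^{(l)}$. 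I would do an induction on $l$ to make the ``branch of $v$ in $T_v^{(l+1)}$ is a copy of $T_{v_i}^{(l)}$'' claim rigorous, though the base case $l=1$ and the inductive step are both immediate from the recursive construction of WL subtrees.

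The main obstacle, and the place to be careful, is pinning down the exact notion of ``$\subset$'' for rooted trees being used (rooted subtree embedding that maps root to root and children to children), and in particular justifying that an embedding of $T_\ervq^{(l+1)}$ must respect the root and cannot ``cross'' between sibling branches of the target — this is what forces $f$ to be well-defined and injective. Once the rooted-tree embedding conventions are fixed (consistent with the uniqueness of WL subtrees invoked earlier), both directions are essentially a bookkeeping argument on the recursive tree structure rather than anything combinatorially deep.
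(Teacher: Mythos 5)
Your proposal is correct and follows essentially the same route as the paper: both directions are handled by the recursive decomposition of the WL subtree into its root plus the disjoint child subtrees $T_{\ervq_i}^{(l)}$, restricting the embedding to branches for $1)\Rightarrow 2)$ and gluing branch embeddings (root to root) for $2)\Rightarrow 1)$, with disjointness of the branches supplying well-definedness and injectivity. If anything, you are more explicit than the paper about why the embedding must preserve the root and cannot cross between sibling branches — points the paper's own proof passes over quickly — so no gap to report.
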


\eat{\todo{Rewrite the notation to make it consistent with the update one}}
\begin{proof}
We assume $f_\ervq$ is a subtree isomorphism injective function in the condition 1), that $\forall$ node $u,v \in T_\ervq^{(l+1)}, (u,v)$ is an edge of $T_\ervq^{(l+1)} \Rightarrow ((f_\ervq(u),f_\ervq(v))$ is an edge of $T_\ervt^{(l+1)}$. Similarly We also assume $f_{\ervq_i}$ is subtree isomorphism injective in the condition 2).\\
On the one hand, if condition 1) is true then $f_\ervq$ exists.  
Using the property of WL tree, we have {$\forall \ervq_i \in N(\ervq), T_{\ervq_i}^{(l)}\subset T_\ervq^{(l+1)}$}, which means the $l$-order WL tree of any node $\ervq_i$ in $\ervq$'s neighbourhood belongs to the $l+1$-order WL tree originate from the node $\ervq$. 
This suggests that $f_\ervq$ maps $T_{\ervq_i}^{(l)}$ into a tree $T_{f(\ervq_i)}^{(l)} = T_{\ervt_i}^{(l)}$, which is a subtree of $T_{\ervt}^{(l+1)}$,. 
\eat{Let $a_i=f_b(b_i)$,}Then the condition 2) is true. \\
On the other hand, if condition 2) holds, then we define the mapping as $ f_\ervq(v)= \begin{cases}f_{\ervq_i}(v), & v \in T_{\ervq_i}^{(l)} \\ \ervq, & v=\ervq\end{cases}$.
Here, the function $f_\ervq(v)$ is a well-defined injective function because all $T_{\ervq_i}^{(l)}$ has no intersection.
This implies this is a subtree isomorphic mapping, so 1) holds.
% it can be verified that it is a subtree isomorphic mapping, so 1) holds. 

\end{proof}

\eat{
Please refer the Apendix~\todo{} for the proof.
Intuitively, we can embody the above Condition 2 by maintaining an indicator matrix $S^{(l)}\in R^{|V_\gT|\times |V_\gQ|}$, where $S_{i j}^{(l)}= \begin{cases}1, & T_j^{(l)} \subset T_i^{(l)} \\ 0, & \text { else }\end{cases}$. 
This captures the relation between all pairs of nodes and thus can be used for matching.
Optimizing perfect matching is done by updating the matrix $S^{(l)}$ recursively.
Our theoretical analysis shows that the above condition 2 can be implemented as a perfect matching problem, i.e., what makes condition 2) true is equivalent to finding a perfect matching on a bipartite graph, as shown in the following theorem:}

The above theorem provides a recursive solution to the WL subtree isomorphism algorithm. Intuitively, we can maintain an indicator matrix $S^{(l)}\in R^{|V_\gT|\times |V_\gQ|}$, where $S_{\ervt \ervq}^{(l)}= \begin{cases}1, & T_\ervq^{(l)} \subset T_\ervt^{(l)} \\ 0, & \text { else }\end{cases}$. 
This matrix captures the relation between all pairs of nodes and thus can be used for recursion update.
Next, we will show that the update process can be implemented as a perfect matching problem, i.e., what makes condition 2) true is equivalent to finding a perfect matching on a bipartite graph, as shown in the following theorem:
% Following this,  we show that this matrix can be understood as a bipartite graph, which is presented as the following 
%There exists a perfect matching on the bipartite graph $B(V_{ab}, E_{ab})$, where $V_{ab} = N(a) \cup N(b)$ and $\forall a_i\in N(a), b_i\in N(b), (a_i, b_i)\in E_{ab}$ if and only if $T_{b_i}^{(l)}\subset T_{a_i}^{(l)}$.

\eat{\todo{Rest: whether need to change a, b to $\ervq$ and $\ervt$, to minimize the reviewers' memory}}
\begin{theorem}
Assume the neighboring set of node $\ervt$ and $\ervq$ as $X=N(\ervt)$ and $Y=N(\ervq)$, respectively. Accordingly, we form a bipartite graph as $B_{\ervt,\ervq}^{(l)}(X, Y, E)$. Here, we define the edges as  $E=\{(\ervt_i,\ervq_j):T_{\ervq_i}^{(l)}\subset T_{\ervt_j}^{(l)}  \}$, where $\ervt_i$ and $\ervq_j$ represent the $i$th and $j$th neighbour of node $\ervt$ and $\ervq$, respectively. Under this setting,  
 the injective function $f$ from the condition 2) in Theorem.~\ref{theo:recursive_equal} induces a perfect matching. % \todo{?}
% the following two are equal:\\
% 1) the condition  2 in Theorem.~\ref{theo:recursive_equal} is true;\\
% 2) the bipartite graph $B$ has a perfect matching if $T_b^{(l+1)}\subset T_a^{(l+1)}$.
\label{theo:bipartite}
\end{theorem}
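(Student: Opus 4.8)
The plan is to take the injective map $f:N(\ervq)\to N(\ervt)$ supplied by condition 2) of Theorem~\ref{theo:recursive_equal} and read off a matching of $B^{(l)}_{\ervt,\ervq}(X,Y,E)$ directly from its graph. Concretely, I would set
\[
M \;=\; \bigl\{\,(\,f(\ervq_i),\,\ervq_i\,)\;:\;\ervq_i\in N(\ervq)\,\bigr\},
\]
written with the target-side vertex first to match the edge convention of $B^{(l)}_{\ervt,\ervq}$, and then verify in three short steps that $M$ is a perfect matching in the sense used for Theorem~\ref{theo:Hall}, i.e.\ a set of edges covering every vertex of the smaller side $Y=N(\ervq)$ exactly once.

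First, $M\subseteq E$: by hypothesis $T_{\ervq_i}^{(l)}\subset T_{f(\ervq_i)}^{(l)}$ for every $\ervq_i\in N(\ervq)$, which is precisely the defining condition for $(f(\ervq_i),\ervq_i)$ to be an edge of $B^{(l)}_{\ervt,\ervq}$. Second, $M$ covers each $\ervq_i\in Y$ exactly once, since $\ervq_i$ occurs in exactly one element of $M$, namely $(f(\ervq_i),\ervq_i)$. Third, $M$ covers each $\ervt_j\in X$ at most once: if $(\ervt_j,\ervq_i)\in M$ and $(\ervt_j,\ervq_{i'})\in M$, then $f(\ervq_i)=\ervt_j=f(\ervq_{i'})$, and injectivity of $f$ forces $\ervq_i=\ervq_{i'}$. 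Hence $M$ is a matching saturating $Y$, and since $|Y|=|N(\ervq)|\le|N(\ervt)|=|X|$ (because $f$ embeds $Y$ into $X$), this is exactly the perfect matching promised by condition 3).

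For completeness, and to close the equivalence $2)\Leftrightarrow 3)$ needed for Theorem~\ref{theo:recursive_equal_app}, I would also record the reverse direction, which is equally mechanical: given a perfect matching $M'$ of $B^{(l)}_{\ervt,\ervq}$, each $\ervq_i\in N(\ervq)$ lies in a unique $M'$-edge $(\ervt_{j(i)},\ervq_i)$; setting $f(\ervq_i):=\ervt_{j(i)}$ gives a map $N(\ervq)\to N(\ervt)$ that is injective (two preimages of the same $\ervt_j$ would yield two $M'$-edges at $\ervt_j$) and satisfies $T_{\ervq_i}^{(l)}\subset T_{f(\ervq_i)}^{(l)}$ by the edge condition, i.e.\ condition 2).

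I do not expect a genuine obstacle here; the only point needing care is bookkeeping of conventions. One must respect the orientation convention that an edge of $B^{(l)}_{\ervt,\ervq}$ is listed target-vertex first, and one must use the paper's reading of ``perfect matching'' on an unbalanced bipartite graph as a matching that saturates the smaller part $Y=N(\ervq)$ — the same reading under which Theorem~\ref{theo:Hall} is stated, so no inconsistency arises. All of the substantive content is already carried by Theorem~\ref{theo:recursive_equal}; this statement is merely the translation of its injective map into the language of bipartite matchings.
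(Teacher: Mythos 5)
Your proposal is correct and is essentially the paper's own argument: both take the edge set $\{(f(\ervq_i),\ervq_i)\}$, verify membership in $E$ via the subtree containment, and use injectivity of $f$ to conclude the edges form a matching saturating $N(\ervq)$. Your added converse direction (recovering $f$ from a perfect matching) is a reasonable completeness note that the paper leaves implicit, but it does not change the approach.
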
 % \vsa 

\begin{proof}
The injective function $f$ of condition 2) in Theorem~\ref{theo:recursive_equal} maps every node $\ervq_i$ in $N(\ervq)$ to $\ervt_i = f(\ervq_i) \in N(\ervt)$ and $T_{\ervq_i}^{(l)}\subset T_{\ervt_i}^{(l)}$ holds. While $T_{\ervq_i}^{(l)}\subset T_{\ervt_i}^{(l)}$ means $(\ervq_i, \ervt_i) \in E$, the injective $f$ naturally corresponds every node $\ervq_i$ to an edge $(\ervq_i,\ervt_j)$. Since $f$ is an injective function, $\ervq_{i_1} \neq \ervq_{i_2} \Rightarrow \ervt_{i_1} \neq \ervt_{i_2}$, indicating that all these edges $(\ervq_{i}, \ervt_{i}), i=1,...,|N(\ervq)|$ are different, which actually forms a perfect matching.
\end{proof}
% \end{proof}

\eat{\todo{miss proof}}

\eat{
\begin{theorem}
% By treating the indicator matrix $S$ as node attributes, 
 Given a target graph and sampled query graph, we have their adjacency matrices as $A_\gT $ and $ \tilde{A}_\gQ$, and have the degree matrix of the sampled query graph as $\tilde{D}_\gQ=\diag[\sum_{s}((\tilde{A}_\gQ)_{:s})]$. 
 % counting
%  The perfect 
%  And denote a boolean matrix $\Phi$ where $\Phi_{a,b}$ is the result of pair $(a,b)$.  
 To perform the comparison of $|N(W)|\ge |W|$, we can formulate the comparison matrix $\Phi $ as $  \Phi =( Z_{N(W)}\ge 1) $, 
%  \begin{equation}
%  \begin{aligned}
%     \Phi =( Z_{N(W)}\ge 1) 
% \end{aligned}
%  \end{equation}
 where $Z_{N(W)}= \aggregate_{\scriptsize\mbox{sum}}(A_\gT,Z_W)$ and $  Z_W=\aggregate_{\max}(\tilde{D}_\gQ^{-1}\cdot\tilde{A}_\gQ,(S^{(l)})^T).$
\eat{ $\Phi =( Z_{N(W)}\ge 1)$, where $ Z_{N(W)}= \aggregate_{\scriptsize\mbox{sum}}(A_\gT,Z_W) $, and $ Z_W=\aggregate_{\max}(\tilde{D}_\gQ^{-1}\cdot\tilde{A}_\gQ,(S^{(l)})^T)$.
}
~\label{theo:znw_app}
\end{theorem}
}
\begin{theorem}\label{theo:znw_app}
Given the sampled query graph and the target graph, we can construct their adjacency matrices , $\tilde{A}_\gQ$ and $A_\gT$, and the degree matrix of the sampled query graph $\tilde{D}_\gQ=\mbox{diag}(\sum_{s}((\tilde{A}_\gQ)_{:s}))$. Here, we denote the indicator matrix at the $l$-th hop as $S^{(l)}$.  To check the validity of $|N(W)|\ge |W|$, we can check whether each element of $\Phi$ is true or not, where $\Phi \coloneqq Z_{N(W)}\ge 1 $, $Z_{N(W)}= \aggregate_{\scriptsize\mbox{sum}}(A_\gT,Z_W^T)$ and $  Z_W=\aggregate_{\max}(\tilde{D}_\gQ^{-1}\cdot\tilde{A}_\gQ,(S^{(l)})^T).$

\end{theorem}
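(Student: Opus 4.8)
The plan is to unwind the three nested operations in the definition of $\Phi^{(l+1)}$ and show that, entrywise, each one computes exactly the quantity prescribed by Hall's condition. Fix a candidate pair $(\ervt,\ervq)$ with $\ervt\in V_\gT$ and $\ervq\in V_\gQ$, and let $W=N'(\ervq)$ be the neighbourhood of $\ervq$ in the sampled graph $\tilde G_\gQ$, so that $W\subseteq N(\ervq)$ as noted after Theorem~\ref{theo:Hall}. The goal is to read off, from the matrix arithmetic, the two integers $|W|$ and $|N(W)|$ (the latter taken in the \emph{target} graph, i.e. $N(W)\subseteq N(\ervt)$ via the edge set $E$ of the bipartite graph $B^{(l)}$), and to verify that $Z_{N(W)}\ge 1$ holds at coordinate $(\ervt,\ervq)$ precisely when $|N(W)|\ge|W|$.

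First I would analyze $Z_W=\aggregate_{\max}(\tilde D_\gQ^{-1}\tilde A_\gQ,(S^{(l)})^T)$. The row-normalized matrix $\tilde D_\gQ^{-1}\tilde A_\gQ$ restricted to row $\ervq$ puts weight $1/|W|$ on each neighbour in $W$ and $0$ elsewhere; since $(S^{(l)})^T$ has a $1$ in position $(\ervq_i,\ervt)$ iff $T_{\ervq_i}^{(l)}\subset T_{\ervt}^{(l)}$, the $\max$-aggregation over neighbours of $\ervq$ yields $1/|W|$ at coordinate $(\ervq,\ervt)$ exactly when at least one element of $W$ has its $l$-subtree embeddable into $T_\ervt^{(l)}$, and more usefully the $\max$ over the \emph{set} $\{b\in W\}$ of the indicators $S^{(l)}_{\ervt_? \,\ervq_i}$ scaled by $1/|W|$ detects membership — here I will need to be careful that what is really wanted is, for each target node $\ervt_j\in N(\ervt)$, the value $\tfrac{1}{|W|}\max_{\ervq_i\in W} S^{(l)}_{\ervt_j\ervq_i}$, which is $1/|W|$ iff $\ervt_j\in N(W)$ and $0$ otherwise. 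Then $Z_{N(W)}=\aggregate_{\scriptsize\mbox{sum}}(A_\gT,Z_W^T)$ sums this over the neighbours $\ervt_j$ of $\ervt$, giving $|N(W)|/|W|$ at coordinate $(\ervt,\ervq)$. Comparing with $1$ therefore yields $|N(W)|\ge|W|$, which is exactly the per-subset check in Hall's marriage theorem; intersecting over all sampled $\tilde G_\gQ^{(k)}$ (i.e. over all subsets $W$ realized by the Drop-Edge samples) enforces the condition $\forall W\subseteq N(\ervq),\,|N(W)|\ge|W|$, hence the perfect matching on $B^{(l)}(N(\ervt),N(\ervq),E)$ guaranteed by Theorem~\ref{theo:Hall}, which by Theorem~\ref{theo:recursive_equal_main} is equivalent to $T_\ervq^{(l+1)}\subset T_\ervt^{(l+1)}$.

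The routine part is the bookkeeping of which axis of each matrix is indexed by query nodes versus target nodes, and tracking the transposes so that $\aggregate$ (which aggregates along the graph given by its first argument) is applied on the correct graph; I would state once and for all the convention $S^{(l)}\in\R^{|V_\gT|\times|V_\gQ|}$ and propagate it. The main obstacle I anticipate is the subtle point that $\aggregate_{\max}$ combined with the $1/|W|$ normalization is being used as an \emph{indicator} of set membership rather than as a genuine average: one must check that the max of a $0/1$ vector scaled by $1/|W|$ is still either $0$ or $1/|W|$, and that summing these gives a count — this is clean but must be argued, and it is where an off-by-normalization error would hide. A secondary subtlety is the boundary behaviour when $W=\varnothing$ (an isolated node after sampling) or when $N(W)=\varnothing$, where the inequality $0\ge 0$ should be read as vacuously satisfied; I would handle these degenerate cases explicitly so the claim "$\Phi^{(l+1)}$ is true iff Hall's condition holds for this $W$" is literally correct.
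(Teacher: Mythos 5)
Your proposal is correct and follows essentially the same route as the paper's proof: interpret the max-aggregation entrywise as the indicator of $N(\ervt)\cap M(\ervq)$ (the paper writes $N(W)=N(\ervt)\cap M(\ervq)$ explicitly), the sum-aggregation over $A_\gT$ as the count $|N(W)|$, and the degree normalization as the division by $|W|$, the only cosmetic difference being that the paper first derives the unnormalized identities and then commutes $\tilde D_\gQ^{-1}$ through the max, whereas you carry the $1/|W|$ factor throughout. Your flagged subtleties (max of a scaled $0/1$ vector, the $W=\varnothing$ boundary case) are real but minor, and the second is not addressed in the paper either.
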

%上述过程可以通过两次GNN聚合得到

\begin{proof}
For each node pair $\ervt,\ervq$ and their corresponding $W=N'(\ervq)$ in the sampled query graph, We first transform the neighboring set of $W$, i.e., $N(W)$, as following:
\begin{equation}
    \begin{aligned}
        & N(W)=\{\ervt_i \in N(\ervt) | \exists \ervq_j \in W= N'(\ervq), \mbox{s.t.} T_{\ervq_j}^{(l)}\subset T_{\ervt_i}^{(l)}\} \\     
        & = \{\ervt_i\in N(\ervt)|\exists \ervq_j\in W= N'(\ervq), \mbox{s.t.} S_{\ervt_i,\ervq_j}=1\} \\       
        & =\{\ervt_i\in N(\ervt)|\max_{\ervq'\in N'(\ervq)}{S_{\ervt_i,\ervq'}^{(l)}=1}\} \\                
        & =N(\ervt)\cap\{\ervt_i|\max_{\ervq'\in N'(\ervq)}{S_{\ervt_i,\ervq'}^{(l)}=1}\} \\
        & = N(\ervt)\cap M(\ervq)
    \end{aligned} 
\end{equation}
Let $M(\ervq)=\{\ervt_i|\max_{\ervq'\in N'(\ervq)}{S_{\ervt_i,\ervq'}^{(l)}=1}\}$,  
we can compute $M(\ervq)$ via a standard maximizing aggregation process on the sampled adjacency matrix $\tilde{A}_\gQ$, in which treats the indicator matrix $(S^{(l)})^T\in R^{|V_\gQ|\times |V_\gT|}$ as node attributes. 
This process will output the representation of node $\ervq$ as follows, 
\begin{equation}
    \begin{aligned}
        z_{\ervq,:} = max\{(S^{(l)})^T_{j,:},\forall j\in N'(\ervq)\},
    \end{aligned}
\end{equation}
The obtained vector $ z_{\ervq,:}$ is to represent the neighbours of node $\ervq$, i.e., $ M(\ervq) $, where $z_{\ervq i}= \begin{cases}1, & i \in M(\ervq) \\ 0, & \text { else }\end{cases}$. We rewrite this into a matrix format as
\begin{equation}
    \begin{aligned}
        Z_W=\aggregate_{\max}(\tilde{A}_\gQ,(S^{(l)})^T)
    \end{aligned}
\end{equation}
where $Z_W\in R^{|V_\gQ|\times |V_\gT|}$ and its $\ervq$-th row vector is $z_{\ervq:}$. 

% This will used to learn the indicator matrix.

Recall that we demand $N(W)=N(\ervt)\cap M(\ervq)$. After acquiring $M(\ervq)$, we can compute the $|N(W)|$ as follows,
\begin{equation}
    \begin{cases}
 & |M(\ervq)|=\sum_{i}z_{\ervq,i}\\
 & |N(W)|=\sum_{i}{z_{\ervq,i}}, i\in N(\ervt)
\end{cases}
\end{equation}
In essence, this is to implement a summation aggregation on the target graph using the node representation $Z_W$, i.e.,
\begin{equation}
    \begin{aligned}
        Z_{N(W)}= \aggregate_{\scriptsize\mbox{sum}}(A_\gT,Z_W^T) 
    \end{aligned}
\end{equation}
where $Z_{N(W)}\in R^{|V_\gT|\times|V_\gQ|}$ is an integer matrix and its element $(\ervt,\ervq)$ shows the score of $|N(W)|$ between node $\ervt$ and $\ervq$. 
This transformation converts the counting operation as aggregation such that we can 
check the aggregated values to determine whether there is a perfect matching.
Given a node pair $(\ervt, \ervq)$, we have $|N(W)| = [Z_{N(W)}]_{\ervt\ervq}$ and $|W|=|N'(\ervq)| = \sum_{s}[\tilde{A}_\gQ]_{\ervq s}$.  Therefore, the question becomes to check whether $[Z_{N(W)}]_{\ervt \ervq} \ge \sum_{s}[\tilde{A}_\gQ]_{\ervq s}$ holds.  We can then derive the perfect matching as follows: \
% by using the sample graph $\tilde{A}_\gQ$: 
\begin{equation}
    \begin{aligned}
        & [Z_{N(W)}]_{\ervt \ervq}\ge \sum_{s}[\tilde{A}_\gQ]_{\ervq s} \\
        \Leftrightarrow & [Z_{N(W)}]_{\ervt \ervq}/\sum_{s}[\tilde{A}_\gQ]_{\ervq s} \ge 1  \\  
        \Leftrightarrow & [\aggregate_{\scriptsize\mbox{sum}}(A_\gT,Z_W^T)]_{\ervt \ervq}/\tilde{d}_\ervq\ge 1  \\
        \Leftrightarrow&[\aggregate_{\scriptsize\mbox{sum}}(A_\gT,Z_W^T)\cdot \tilde{D}_\gQ^{-1}]_{\ervt \ervq} \ge 1 \\
        \Leftrightarrow&[\aggregate_{\scriptsize\mbox{sum}}(A_\gT,Z_W^T\cdot \tilde{D}_\gQ^{-1})]_{\ervt \ervq} \ge 1 \\
    \end{aligned}
    \label{eq:agg_sum}
\end{equation}
where $\tilde{d}_\ervq$ is the degree of node $\ervq$ in the sampled graph.  The degree matrix of the sample graph is defined as $\tilde{D}_\gQ=\diag[\sum_{s}((\tilde{A}_\gQ)_{:s})]$.
%To simplify the checking in Eq.~(\ref{eq:agg_sum}), we define a Boolean operation as follows: 
%\todo{There is a gap between (3) and (4), why $\tilde{D}_\gQ^{-1}$ does not compute with $A_\gT$}
Now recall that $\Phi$ is the matrix whose $(\ervt,\ervq)$ element is the comparison result of $|N(W)|$ and $|W|$ of $(\ervt,\ervq)$, according to eq \ref{eq:agg_sum}, we have:
\begin{equation}\label{eq:phi}
\Phi^{(l+1)}(\tilde{A}_\gQ, A_\gT)= \aggregate_{\scriptsize\mbox{sum}}(A_\gT,Z_W^T \cdot \tilde{D}_\gQ^{-1}) \ge 1,
\end{equation}
where 
\begin{equation}
\begin{aligned}
        Z_W^T \cdot\tilde{D}_\gQ^{-1}&=[\aggregate_{\max}(\tilde{A}_\gQ,(S^{(l)}))]^T\cdot\tilde{D}_\gQ^{-1} \\
        &=[\tilde{D}_\gQ^{-1} \cdot \aggregate_{\max}(\tilde{A}_\gQ,(S^{(l)}))]^T \\
        &=[\aggregate_{\max}(\tilde{D}_\gQ^{-1}\cdot\tilde{A}_\gQ,(S^{(l)}))]^T
\end{aligned}
\end{equation}

\end{proof}

\begin{theorem}
\label{theo:circles}
Every chordless cycle is atomic. Every chordless cycle $\gC_\gQ$ in an induced subgraph $G_\gQ$ must correspond to a chordless cycle $\gC_\gT$ in the origin graph $G_\gT$. 
\end{theorem}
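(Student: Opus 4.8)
The plan is to prove the two claims in Theorem~\ref{theo:circles} separately. For the first claim --- \emph{every chordless cycle is atomic} --- I would argue by contradiction. Suppose $c=(v_1,\dots,v_{l(c)},v_1)$ is a chordless cycle that is not atomic, so there exists a circle $c'$ with $v(c')\subsetneq v(c)$. Since all vertices of $c'$ lie on $c$ but $c'$ is a strictly smaller cycle, $c'$ must contain two vertices $v_i,v_j$ of $c$ that are \emph{not} adjacent along $c$ (otherwise $c'$ would simply retrace a contiguous arc of $c$ and could not close up into a cycle using only edges of the graph without revisiting, or would coincide with $c$). Walking along $c'$ from $v_i$ to $v_j$ we use edges of $G$; in particular there is an edge of $G$ joining some pair of $c$'s vertices that are non-consecutive on $c$ --- more carefully, since $c'$ is a cycle on a proper subset of $v(c)$, it must use at least one edge $(v_i,v_j)$ that is not an edge of $c$, because a cycle on $k<l(c)$ of the $v$'s cannot be built purely from the $l(c)$ edges of $c$. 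But $(v_i,v_j)\in E(G)$ with $v_i,v_j\in v(c)$ non-adjacent on $c$ is precisely a chord of $c$, contradicting chordlessness. Hence $c$ is atomic.

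For the second claim, I would unpack the statement using the consistency setup already in the paper: $G_\gQ$ is an induced subgraph of $G_\gT$ via the subgraph isomorphism $\xi$, and a chordless cycle $c_\gQ=(v_1,\dots,v_l,v_1)$ in $G_\gQ$ has image $\xi(c_\gQ)=(\xi(v_1),\dots,\xi(v_l),\xi(v_1))$. First, this image is a cycle in $G_\gT$: $\xi$ is injective and maps edges of $G_\gQ$ to edges of $G_\gT$, so consecutive vertices $\xi(v_i),\xi(v_{i+1})$ are adjacent in $G_\gT$ and all $\xi(v_i)$ are distinct. It remains to show $\xi(c_\gQ)$ is chordless in $G_\gT$. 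Suppose it had a chord, i.e.\ $(\xi(v_i),\xi(v_j))\in E_\gT$ for some $i,j$ with $v_i,v_j$ non-consecutive on $c_\gQ$. Here is where I expect to invoke that $G_\gQ$ is an \emph{induced} subgraph: since $v_i,v_j\in V_\gQ$ and $(\xi(v_i),\xi(v_j))\in E_\gT$, the induced-subgraph property forces $(v_i,v_j)\in E_\gQ$, which would be a chord of $c_\gQ$ --- contradiction. Therefore $\xi(c_\gQ)$ is a chordless cycle of $G_\gT$, and it is the required $c_\gT$.

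The main obstacle, and the step I would write most carefully, is the combinatorial argument in the first claim that a smaller circle on a subset of $v(c)$ necessarily uses a non-$c$ edge, hence exhibits a chord; one has to be careful about the definition of ``circle'' (whether it allows repeated edges or repeated vertices) and about degenerate cases such as $c'$ sharing a long arc with $c$. A clean way to handle this is to note that a cycle's vertex set determines a cyclic adjacency structure, and if $v(c')\subsetneq v(c)$ then two vertices adjacent on $c'$ cannot be adjacent on $c$ (since dropping even one vertex of $c$ breaks that arc), so the $c'$-edge between them is a chord of $c$. The second claim is essentially immediate once the induced-subgraph hypothesis is made explicit, so I would emphasize that hypothesis (the phrase ``induced subgraph of the original query graph'' in the statement is doing the real work) rather than belaboring the edge-preservation bookkeeping, which is just the definition of $\xi$.
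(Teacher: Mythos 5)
Your proposal is correct and follows essentially the same route as the paper's proof: the image of the cycle under the isomorphism $\xi$ is a cycle in $G_\gT$, and a chord there would pull back to a chord in $G_\gQ$ by the induced-subgraph hypothesis. Your treatment of the first claim is actually more careful than the paper's one-sentence assertion, since you supply the missing combinatorial step that a smaller cycle on a proper subset of $v(c)$ must use an edge not belonging to $c$ and hence exhibits a chord (with the minor caveat that one should say \emph{some} pair of $c'$-adjacent vertices is non-adjacent on $c$, not every pair).
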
 \vsa 

\begin{proof}
Chordless cycle does not have any chord, thus there is no smaller cycle in the chordless cycle, which means chordless cycle is atomic. Assuming $G_\gQ$ is a subgraph of $G_\gT$, every node of $\gC_\gQ$ must correspond to a node in $G_\gT$, and these nodes form a circle $\gC_\gT$ in $G_\gT$. 
Since $G_\gQ$ is an induced subgraph of $G_\gT$, if $\gC_\gT$ has a chord, then $\gC_\gQ$ must have a chord,  which
contradicts the condition that $\gC_\gQ$ is a chordless graph. 
\end{proof}

\subsection{Implementation Details}
\label{sec:implementations_app}

%初始化
%在子树匹配初始化之前，首先要得到初始的
In this section we present the implementation details of our \ourmeth\footnote{The python implementation of \ourmeth will be available at https://github.com/XuanzhouLiu/D2Match-ICML23}. At the beginning of subtree isomorphism test, the model needs an initial indicator matrix $S_{subtree}^{(0)}$ as the input of the first iteration. 
According to the definition of the indicator matrix, $S_{subtree}^{(0)}$ shows the isomorphism relation between 
the subtree of $0$-hop neighbors,
% nodes whose roots are the $0-$ hop WL subtrees, 
which are  the nodes themselves in this case. 
Since all nodes will be isomorphic to each other if not considering the node attributes, the indicator matrix $S_{subtree}^{(0)}$ is actually a similarity matrix w.r.t node attributes. 
To get a similarity matrix of attributes, we can either directly calculate the similarity between nodes or employ neural networks on these attributes to learn the matrix. 
In our model, we implement both methods to initialize the matrix, called the initialization of the raw and the learnable:
\begin{equation}
\begin{aligned}
    Raw: S_{subtree}^{(0)} &= CosineSimilarity(X_\gT, X_\gQ) = Norm(X_\gT) \cdot Norm(X_\gQ^T) \\
    Learnable: S_{subtree}^{(0)} &= MLP_{}(X_\gT) \cdot MLP_{}(X_\gQ)^T
\end{aligned}
\end{equation}
where the raw initialization is to calculate the cosine similarity between the nodes' attributes, and the learnable initialization employs a MLP to generate hidden representations of nodes and compute their dot similarities.

In practice, we find the raw initialization performs better. 
This is because the node attributes of datasets are usually binary categorical vectors, which induces clear identification information of the nodes and can be easily captured by cosine similarity.

%GNN模块的实现
Our implementation of the GNN block in the model is slightly different from the description. Specifically, we use compute the similarity of each pair of nodes as:
\begin{equation}
    \begin{aligned}
     \footnotesize
        [S^{(l+1)}_{gnn}]_{ij}=MLP(concat([H_\gT^{(l)}]_{i},[H_\gQ^{(l)}]_{j})).
    % S^{(l+1)}_{gnn}=Attention(H_\gT^{(l)},H_\gQ^{(l)}),
    \end{aligned}
\end{equation}
The main difference is that we do not output a $|V_\gT|\times |V_\gQ|$ matrix, but a $|V_\gT| \times |V_\gQ| \times |D^{(l+1)}|$ tensor, where $D^{(l+1)}$ denotes the hidden dim of $l+1$ layer. 
The intuition is that a tensor that represents the node pairs' similarity with vectors can retain more information than a similarity matrix with scalar elements. 
In this setting, the final indicator matrix $S^{(l+1)}$ can not be generated as $S^{(l+1)} = S_{gnn}^{(l+1)} \odot S_{subtree}^{(l+1)}$, because $S_{subtree}^{(l+1)} \in R^{|V_\gT| \times |V_\gQ|}$
but $S_{gnn}^{(l+1)} \in R^{|V_\gT| \times |V_\gQ| \times |D^{(l+1)}|}$. 
Thus we broadcast $S_{subtree}^{(l+1)}$ to $\tilde{S}_{subtree}^{(l+1)}$ where  $\forall k\in [0, D^{(l+1)}),[\tilde{S}_{subtree}^{(l+1)}]_{ijk}=[S_{subtree}^{(l+1)}]_{ij}$ and the final indicator matrix $S^{(l+1)} = S_{gnn}^{(l+1)} \odot \tilde{S}_{subtree}^{(l+1)}$

%预测方式
At the end of our models, we get the subtree indicator matrix $S_{subtree}^{(L)}$ and the GNN indicator matrix $S_{gnn}^{(L)}$. The model will output the final score from  $S_{subtree}^{(L)}$ and $S_{gnn}^{(L)}$, respectively. For the subtree module, we check whether the indicator matrix is feasible to induce the subgraph isomorphism. Note that for a node $i$ in the target graph and a node $j$ in the query graph, $i$ is possible to match $j$ unless $[S_{subtree}^{(L)}]_{ij} = 1$. 
So we check whether the subtree indicator matrix meets the following two conditions:
\begin{compactenum}[1)]
\item Every node in a query graph should match at least one node in the target graph:
\begin{equation}
    \begin{aligned}
     \forall j, &\max_{i}(S_{subtree}^{(L)})_{ij}=1 \\
     \Leftrightarrow &\sum_{j}\max_{i}(S_{subtree}^{(L)})_{ij}=|V_\gQ| \\
     \Leftrightarrow &\sum_{j}\max_{i}(S_{subtree}^{(L)})_{ij}/|V_\gQ|=1
    \end{aligned}
\end{equation}

\item The number of nodes in the target graph that match at least one node in the query graph is more than the number of nodes of query graph:
\begin{equation}
    \begin{aligned}
        &\sum_{i}\max_{j}(S_{subtree}^{(L)})_{ij}\ge |V_\gQ|
        \Leftrightarrow &\sum_{i}\max_{j}(S_{subtree}^{(L)})_{ij}/|V_\gQ| \ge 1
    \end{aligned}
\end{equation}
\end{compactenum}

To make the subtree model differentiable, we use a sigmoid to replace all the logical judgment in the model:
\begin{equation}
    \sigma(x) = \mathrm{Sigmoid}( ax + b)
\end{equation}
where $a,b$ are learnable parameters; $\sigma$ is the sigmoid function.
The result of subtree module can be fomulated as:
\begin{equation}
    \begin{aligned}
        r_{subtree} = \sigma(\sum_{i}\max_{j}(S_{subtree}^{(L)})_{ij}/|V_\gQ|) \cdot \sigma(\sum_{j}\max_{i}(S_{subtree}^{(L)})_{ij}/|V_\gQ|)
    \end{aligned}
\end{equation}

For the GNN module, we employ the neural tensor network(NTN)~\citep{Bai2019SimGNN} and generate a score according to the output of NTN and the aggregated indicator tensor:
\begin{equation}
    \begin{aligned}
        r_{gnn} = \sigma(MLP(concat[NTN(H_\gT^{(L)},H_\gQ^{(L)}), \sum_i\sum_j{S_{subtree}^{(L)}}]))
    \end{aligned}
\end{equation}
Where $H_\gT^{(L)}, H_\gQ^{(L)}$ are the node representations generated by the GNNs. $NTN$ is the NTN layer.

The final prediction is:
\begin{equation}
    r = r_{gnn} \cdot r_{subtree}
\end{equation}

%每个检索图节点至少对应一个目标图节点：$\forall j, \max_{i}(S_{subtree}^{(L)})_{ij}=1$相当于$\sum_{j}\max_{i}(S_{subtree}^{(L)})_{ij}=|V_\gQ|$，我们通过和之前相同的sigmoid处理代替逻辑判断，得到连续的输出值$r_1$。

%2）目标图中与检索图对应的节点数量一定多于检索图的节点数量：$\sum_{i}\max_{j}(S_{subtree}^{(L)})_{ij}\ge |V_\gQ|$

%我们通过和之前相同的soid处理代替逻辑判断,得到连续的输出值$r_2$。

%训练

Although the model's prediction is obtained by integrating the two modules, we can not directly train the model through the final score $r$ because it will bring difficulties in the training process. When fitting a negative sample, the resulting subtree module tends to be zero, forcing the overall gradient to be zero which hinders the training of the GNN module. 

Therefore, we train the two blocks with different objectives. For the subtree module which aims to learn the isomorphism relation, the result should be either 0 for not matching or 1 for matching. So we employ MAE loss to enforce the results to be either $0$ or $1$. 
For the GNN module, we use MSE to encourage the output of GNNs to capture the similarity.
Suppose the ground-truth label is $y$, and our loss function is
\begin{equation}
    L = MSE(r_{gnn}, y) + MAE(r_{subtree}, y)
\end{equation}

Both our model and all baselines use the Adam as optimizer and set the learning rate to $3e-4$. To ensure fairness, we set all models with adjustable number of layers to 5 layers, and set the hidden dimension to 10 to avoid overfitting.

\eat{
Therefore, we employ the MAE and MSE as the loss functions for training the subtree and GNN  modules, respectively. 
To satisfy the matching requirement, we use MAE to enforce the results to be either $0$ or $1$.}

\begin{figure*}[!ht]
    
	\centering
    %\hspace{.08in}
    \begin{minipage}[b]{0.3\linewidth}
		\centering
		\includegraphics[scale=0.33]{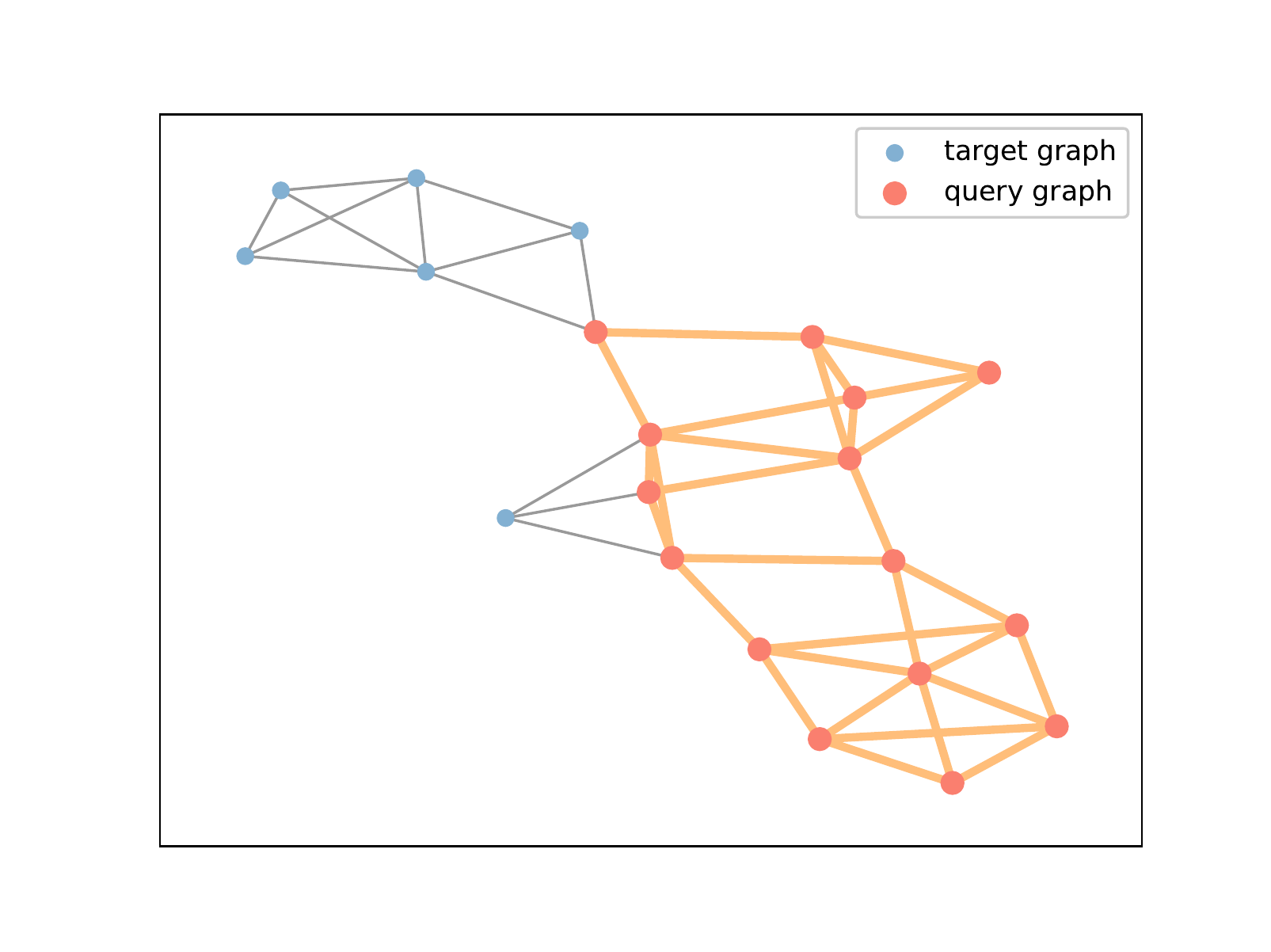}
		\centerline{\scriptsize{(a)}}
		% \centerline{\scriptsize{(a) layers }}
	\end{minipage}
    % \vspace{.003in}
        \hspace{.08in}
	\begin{minipage}[b]{0.3\linewidth}
		\centering
		\includegraphics[scale=0.33]{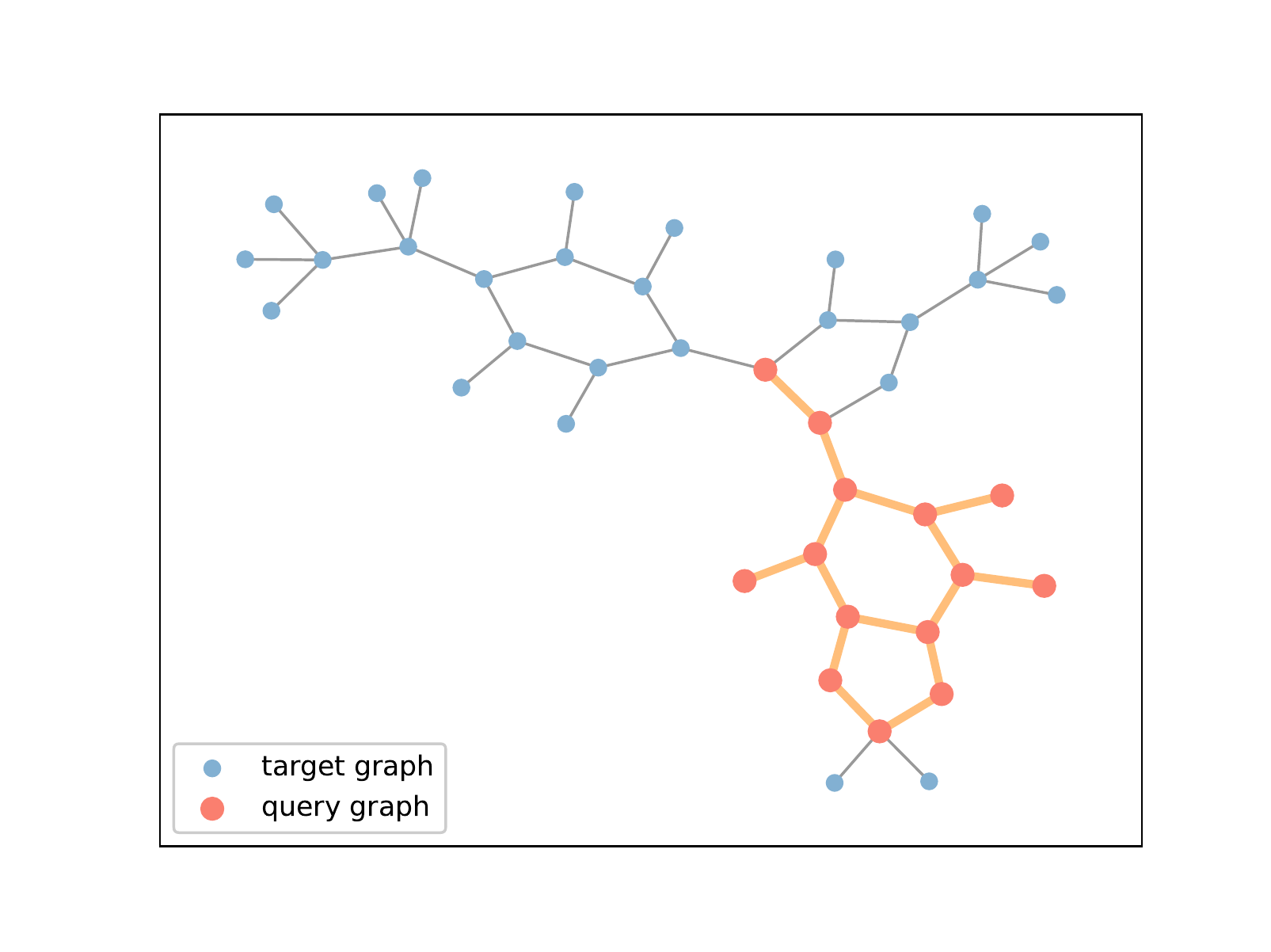}
		\centerline{\scriptsize{(b)}}
	\end{minipage}
    \hspace{.08in}
	\begin{minipage}[b]{0.3\linewidth}
		\centering
		\includegraphics[scale=0.33]{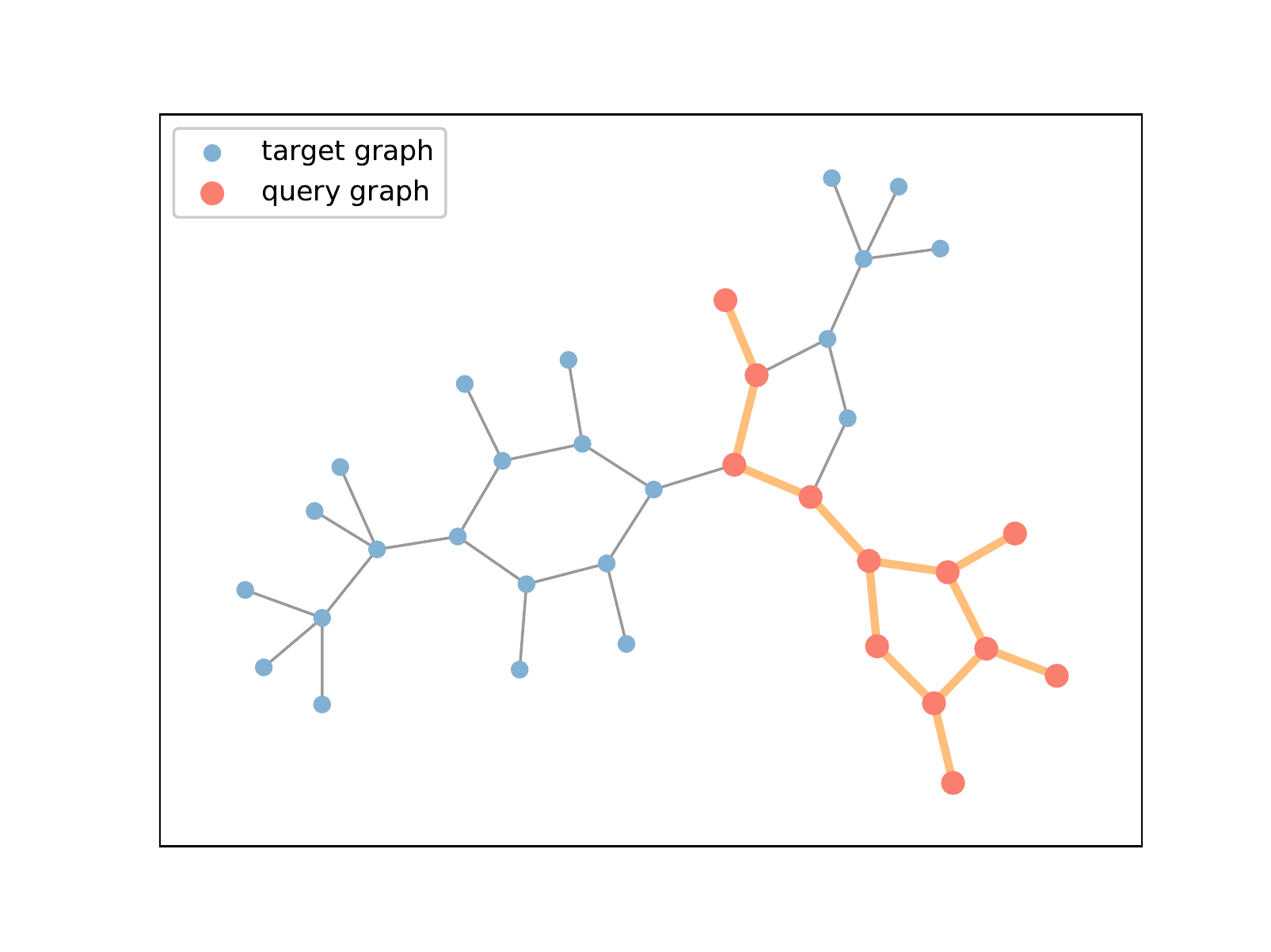}
		\centerline{\scriptsize{(c)}}
	\end{minipage}
	\caption{\footnotesize The detected subgraphs by $D^2$Match}
	\label{fig:visual}
\end{figure*}

\subsection{\ourmeth at Work}
\label{visual_subgraph}
Recall that \ourmeth learns an indicator matrix to capture pairwise similarities.
It plays the role of permutation matrix in matching, allowing us to pinpoint the matched subgraph. 
This is particularly useful since the exact position is required for some downstream applications such as web search.
In comparison, other learning-based methods are unable to pinpoint local correspondences, but only establish the existence of a matching.
We provide a visualization of the matched subgraph to better understand the problem difficulty and the effectiveness of our method, as shown in the Fig.~\ref{fig:visual}.

\subsection{Ablation Studies}
\label{subsec:ablation}

We perform ablation studies for the GNN module, subtree module, and chordless cycle.

The GNN module in our analysis will capture the distributional features on the graph, such as the edge density difference between classes.
The GNN module is thus essential for datasets with multiple distributions, also called biased data.
We run experiments on both the biased and unbiased synthetic datasets to show the performance of our method and its variation that without the GNN module, as shown in Table \ref{tab: subtree ablation}. 
\ourmeth outperforms \ourmeth without the GNN module as our theory predicts.
But our GNN module shares the same weaknesses as the other GNN models when dealing with evenly distributed data.
We observe that the performance of the GNN module drops significantly on hard datasets similar to other GNN models.
The subtree module can significantly improve the performance because it harnesses the property of subgraph-matched data, making it robust to data's distribution. 
Our subtree module outperformed the GNN module on all datasets in our ablation study, demonstrating its effectiveness.

We also perform the ablation study on the Synthetic dataset to test the effect of chordless cycles,as shown  in Table~\ref{tab: cc ablation}. Results show the chordless cycles boost the performance with limited extra time consumption.

\begin{table}
\footnotesize
\caption{ The ablation study of D2Match module }
    \centering
    %\footnotesize
    \scriptsize
    \scalebox{1.0}{
    \begin{tabular}{c|cccccccccc}
    \toprule
            & Synthetic &  Synthetic$^+$ & Proteins & Proteins$^{*}$ & IMDB-Binary & IMDB-Binary$^{*}$ & FirstMMDB & FirstMMDB$^{*}$ \\
        \midrule
              \ourmeth(gnn only) & 61.1$_{\pm\text{13.31}}$ & 70.2$_{\pm\text{18.58}}$  & 95.2$_{\pm\text{1.04}}$ & 77.2$_{\pm\text{8.11}}$ &  50.0$_{\pm\text{0.00}}$ & 64.4$_{\pm\text{19.73}}$ & 69.7$_{\pm\text{26.98}}$ & 67.8$_{\pm\text{24.38}}$   \\
              \ourmeth(subtree only) & 70.0$_{\pm\text{2.09}}$ & 74.8$_{\pm\text{2.56}}$  & 100.0$_{\pm\text{0.00}}$ & 82.0$_{\pm\text{2.92}}$ &  92.9$_{\pm\text{1.04}}$ & 82.8$_{\pm\text{4.02}}$ & 100.0$_{\pm\text{0.0}}$ & 72.0$_{\pm\text{6.20}}$   \\       
       \midrule
            \ourmeth & 72.7$_{\pm\text{4.45}}$ & 86.6$_{\pm\text{1.44}}$  & 100.0$_{\pm\text{0.00}}$ & 83.4$_{\pm\text{2.97}}$ &  93.3$_{\pm\text{1.03}}$ & 90.2$_{\pm\text{1.79}}$ & 100.0$_{\pm\text{0.0}}$ & 86.4$_{\pm\text{7.44}}$   \\    
        \bottomrule
    \end{tabular}
    }
    \label{tab: subtree ablation}
\end{table}

\begin{table}[!th]
\hspace{-0.7cm}
\begin{minipage}[!hbtp]{.50\linewidth}
    \centering
    \vspace{0.3cm}
    \caption{ The ablation study of cc }
    \scalebox{0.9}{
    \begin{tabular}{ccc}
    \toprule
            & Synthetic &  RunTime \\
        \midrule
              \ourmeth  & 74.3$_{\pm\text{1.60}}$ & 19.7s/epoch    \\
        % \ourmeth (w/o subtree) ~\cite{} & 71.0$_{\pm\text{2.47}}$ & \textbf{87.2}$_{\pm\text{4.15}}$ & 96.8$_{\pm\text{1.92}}$ & 94.3$_{\pm\text{4.04}}$  \\
            \ourmeth(w/o cc) &  72.7$_{\pm\text{4.45}}$ & 10.3s/epoch \\
        \bottomrule
    \end{tabular}
    }\vsa
    \label{tab: cc ablation}  
    \caption{ Random seed comparison }
%    \begin{adjustbox}{width=0.5\textwidth}
\scalebox{1.1}{
    \begin{tabular}{c|cc}
    \toprule
            & proteins &  mutag \\
        \midrule
              Seed(0) & 100.0$_{\pm\text{0.00}}$ & 100.0$_{\pm\text{0.00}}$   \\
              Seed(1) & 100.0$_{\pm\text{0.00}}$ & 100.0$_{\pm\text{0.00}}$   \\
              Seed(2) & 100.0$_{\pm\text{0.00}}$ & 100.0$_{\pm\text{0.00}}$   \\
       \midrule
            Fixed & 100.0$_{\pm\text{0.00}}$ & 100.0$_{\pm\text{0.00}}$   \\
        \bottomrule
    \end{tabular}
}\vsa
%    \end{adjustbox}
    \label{tab: random seed result}
\end{minipage}
\hspace{-1.0cm}
\begin{minipage}[!hbtp]{.5\linewidth}
    \centering
    \caption{Runtime analysis }
    \scalebox{0.86}{
    \begin{tabular}{c|cc}
    \toprule
            & Training(s/epoch) & Inference(s/epoch)  \\
        \midrule
              SimGNN & 1.732 & 0.385 \\
              NeuroMatch & 2.234 & 0.311 \\
              GMN-embed & 1.850 & 0.290 \\
              GraphSim & 3.223 & 0.433 \\
              IsoNet & 10.553 & 1.939 \\
       \midrule
            \ourmeth-Subtree(S=2) & 2.940 & 0.456  \\
            \ourmeth-Subtree(S=3) & 3.889 & 0.581  \\
            \ourmeth-Subtree(S=4) & 4.410 & 0.673  \\
            \ourmeth-Subtree(S=5) & 5.143 & 0.750  \\
            \ourmeth-GNN & 2.678 & 0.495  \\
            \ourmeth & 8.163 & 1.114  \\
        \bottomrule
    \end{tabular}
    }\vsa
    \label{tab: time analysis}  \vsc\vsc   
\end{minipage}
\end{table}

\subsection{Random Effect}
Although our experiments do not rely on random seeds, a random split may affect the results. To test this, we set up several random seeds and permute the raw data order before getting the five-fold. We experiment on the Protein and Mutag datasets with trivial random seed 0,1,2 and obtain nearly identical performance. See Table \ref{tab: random seed result}.

While other methods based on GNNs tend to capture the divergence of distributions in the training set and hence are easily affected by randomness, our subtree module performs the matching explicitly by the degeneracy property, as opposed to modeling the data distribution in others, hence ours is insensitive to data partitioning.

\begin{table}[th!]
    \centering
    \caption{ The hard dataset details }
    %\footnotesize
    \scalebox{0.94}{
    \begin{tabular}{c|cccccccccc}
    \toprule
            & Synthetic$^+$ &  Proteins$^{*}$ & Mutag$^{*}$ & Enzymes$^{*}$ & Aids$^{*}$ & IMDB-Binary$^{*}$ & Cox2$^{*}$ & FirstMMDB$^{*}$ \\
        \midrule
              Average nodes (target) & 40.0 & 38.8  & 18.2 & 31.5 &  14.7 & 19.0 & 41.3 & 1376.7   \\
        % \ourmeth (w/o subtree) ~\cite{} & 71.0$_{\pm\text{2.47}}$ & \textbf{87.2}$_{\pm\text{4.15}}$ & 96.8$_{\pm\text{1.92}}$ & 94.3$_{\pm\text{4.04}}$  \\
       Average nodes (query) &  15.0 & 11.4 & 9.1 & 15.4 & 4.4 & 14.2 & 15.0 & 15.0 \\
       \midrule
        Average edges (target) &  259.5 & 146.7 & 40.2 & 120.6 & 30.0 & 177.1 & 87.0 & 6141.6  \\
        Average edges (query) &  67.3  & 35.5 & 17.6 & 52.6 & 7.1 & 102.6 & 29.9 & 45.6  \\
        \bottomrule
    \end{tabular}
    }\vsa
    
    \label{tab: hard dataset}
\end{table}

\begin{table}[th!]
\footnotesize
    \centering
    \caption{ The dataset details }
    %\footnotesize
\scalebox{1.1}{
    \begin{tabular}{c|cccccccccc}
    \toprule
            & Synthetic &  Proteins & Mutag & Enzymes & Aids & IMDB-Binary & Cox2 & FirstMMDB \\
        \midrule
              Average nodes (target) & 40.0 & 39.1  & 17.9 & 33.0 &  15.7 & 19.8 & 41.3 & 1376.5   \\
        % \ourmeth (w/o subtree) ~\cite{} & 71.0$_{\pm\text{2.47}}$ & \textbf{87.2}$_{\pm\text{4.15}}$ & 96.8$_{\pm\text{1.92}}$ & 94.3$_{\pm\text{4.04}}$  \\
       Average nodes (query) &  15.0 & 14.4 & 9.0 & 14.8 & 7.9 & 14.6 & 14.4 & 15.0 \\
       \midrule
        Average edges (target) &  241.7 & 146.5 & 39.5 & 125.6 & 32.4 & 193.1 & 87.0 & 6144.3  \\
        Average edges (query) &  50.6  & 68.9 & 25.1 & 75.3 & 17.1 & 141.0 & 42.8 & 68.1  \\
        \bottomrule
    \end{tabular}
}\vsa
    
    \label{tab: dataset}
\end{table}

\subsection{Runtime Analysis}
\label{sec:runtime}
We add the runtime analysis experiment as follows. We compare our method with baselines on the synthetic dataset and record the training and inference time (second) per epoch. The results are shown in Table \ref{tab: time analysis}.

 Our model is slower than some strong baselines like SimGNN and NeuroMatch in the experiment because they deal with the graph-level representations. Our model is faster than IsoNet, which performs edge-level matching.

We conduct an additional ablation study to explore the time consumption of each module in our model. The results show that the time consumption of our model mainly comes from the sampling in the subtree module whose running time is linearly related to the sampling number. When we set the sampling number as 2, the running time is on par with the others. Furthermore, the running time for the GNN module is the same as for the other baselines. 
In sum, we observe that our model's scalability is acceptable as both complexity analysis and empirical running time show ours is slower than others only by a constant factor.

\subsection{Dataset Details}
\label{subsec:dataset}

We describe the average node number and average edge number of the target graph and query graph in the Table~\ref{tab: dataset} and Table~\ref{tab: hard dataset}. Except the hard datasets, we generate 1000 graph pairs for Synthetic, Proteins, Mutag, Enzymes, Cox2 and FirstMMDB and 2000 graph pairs for Aids and IMDB-Binary which have smaller graph size. 
For the hard dataset, we uniformly generate 500 graph pairs.

\subsection{Results on More Datasets}
\label{subsec: more experiments}
We conduct experiments on the OGB benchmark dataset~\citep{hu2020ogb}, including Ogbg-molhiv and Ogbg-molpcb. We follow the same strategy in the paper to construct normal and hard versions for these datasets and choose the best-performing baselines for comparison, including SimGNN and NeuroMatch. We present new results in Table~\ref{tab: obg result}.

We find that our model performs slightly better than others on normal datasets while gaining a significant advantage over baselines on hard datasets. These results are consistent with our previous experiments, demonstrating that our model exploits the subgraph matching property, rather than simply modeling the divergence of the data distribution as other GNNs.
\eat{
\begin{table}
\footnotesize
\caption{ Obg dataset performance comparison in terms of accuracy }
    \scriptsize
    \centering
    %\footnotesize
    \begin{adjustbox}{width=\textwidth}
    
    \end{adjustbox}
    \label{tab: obg result}
\end{table}
}

We experiment on continuous features from the MNIST, CIFAR10 and PPI datasets, as these are constructed from vision data\citep{Vijay2020Benchmark} or biological information data\citep{Marinka2017PPI}. We  As expected, our model achieves consistent performance as well. See Table \ref{tab: continues result}.

\begin{table}[!th]
\begin{minipage}{.60\linewidth}
    \centering
    \caption{ Obg dataset performance comparison }
\scalebox{0.8}{
    \begin{tabular}{c|cccc}
    \toprule
            & ogb-molhiv &  ogb-molhiv$^{*}$ & ogb-molpcba & ogb-molpcba$^{*}$ \\
        \midrule
              SimGNN & 99.4$_{\pm\text{0.65}}$ & 81.6$_{\pm\text{2.70}}$  & 99.8$_{\pm\text{0.27}}$ & 86.2$_{\pm\text{2.28}}$  \\
              NeuroMatch & 98.3$_{\pm\text{1.68}}$ & 86.0$_{\pm\text{3.54}}$  & 99.8$_{\pm\text{0.27}}$ & 90.6$_{\pm\text{3.51}}$   \\       
       \midrule
            \ourmeth & 99.8$_{\pm\text{0.27}}$ & 99.6$_{\pm\text{0.54}}$  & 100.0$_{\pm\text{0.00}}$ & 100.0$_{\pm\text{0.00}}$  \\    
        \bottomrule
    \end{tabular}
}\vsa
    \label{tab: obg result}  
\end{minipage}
\begin{minipage}{.4\linewidth}
    \centering
    \caption{ Continues dataset performance }
\scalebox{0.8}{
    \begin{tabular}{c|ccc}
    \toprule
            & Cifar10 & MNIST & PPI  \\
        \midrule
              SimGNN & 89.0$_{\pm\text{21.82}}$ & 98.5$_{\pm\text{0.93}}$ &  77.0$_{\pm\text{24.67}}$ \\
              NeuroMatch & 98.1$_{\pm\text{1.14}}$ & 95.9$_{\pm\text{1.34}}$ & 50.0$_{\pm\text{0.00}}$   \\
       \midrule
            \ourmeth & 99.3$_{\pm\text{0.27}}$ & 98.8$_{\pm\text{1.15}}$ & 98.8$_{\pm\text{1.06}}$   \\
        \bottomrule
    \end{tabular}
}\vsa
    \label{tab: continues result}  \vsc\vsc   
\end{minipage}
\end{table}

\eat{
\begin{table}
\footnotesize
\caption{ Continues dataset performance in terms of accuracy }
    \centering
    %\footnotesize
    \scriptsize
    \begin{adjustbox}{width=0.6\textwidth}
    \begin{tabular}{c|ccc}
    \toprule
            & Cifar10 & MNIST & PPI  \\
        \midrule
              SimGNN~\cite{} & 89.0$_{\pm\text{21.82}}$ & 98.5$_{\pm\text{0.93}}$ &  77.0$_{\pm\text{24.67}}$ \\
              NeuroMatch~\cite{} & 98.1$_{\pm\text{1.14}}$ & 95.9$_{\pm\text{1.34}}$ & 50.0$_{\pm\text{0.00}}$   \\
       \midrule
            \ourmeth & 99.3$_{\pm\text{0.27}}$ & 98.8$_{\pm\text{1.15}}$ & 98.8$_{\pm\text{1.06}}$   \\
        \bottomrule
    \end{tabular}
    \end{adjustbox}
    \label{tab: continues result}
\end{table}
}
\eat{
\begin{table}
\footnotesize
\caption{Runtime analysis }
    \centering
    %\footnotesize
    \scriptsize
    \begin{adjustbox}{width=\textwidth}
    \begin{tabular}{c|cc}
    \toprule
            & Training(second/epoch) & Inference(second/epoch)  \\
        \midrule
              SimGNN~\cite{} & 1.732 & 0.385 \\
              NeuroMatch~\cite{} & 2.234 & 0.311 \\
              GMN-embed~\cite{} & 1.850 & 0.290 \\
              GraphSim~\cite{} & 3.223 & 0.433 \\
              IsoNet~\cite{} & 10.553 & 1.939 \\
       \midrule
            \ourmeth-Subtree(Sample Number=2) & 2.940 & 0.456  \\
            \ourmeth-Subtree(Sample Number=3) & 3.889 & 0.581  \\
            \ourmeth-Subtree(Sample Number=4) & 4.410 & 0.673  \\
            \ourmeth-Subtree(Sample Number=5) & 5.143 & 0.750  \\
            \ourmeth-GNN & 2.678 & 0.495  \\
            \ourmeth & 8.163 & 1.114  \\
        \bottomrule
    \end{tabular}
    \end{adjustbox}
    \label{tab: time analysis}
\end{table}}

\subsection{Comparison with Exact Methods}
\label{sec: exact compare}
we compare exact matching solutions, including VF2 and ISMAGS.
By nature, we know that all the exact matching methods can obtain 100 \% accuracy.

As a trade-off between accuracy and execution time, we make the comparison inspired by the setup in NeuroMatch~\citep{Rex2020NeuroMatch}. We say an execution succeeds when its run time is less than 60s. We compare the success rate of the exact methods by varying the query graph size from 10 to 50 on the synthetic data and report the results in Figure~\ref{fig:exact compare}.

\begin{figure}
\centering
\includegraphics[ width=0.8\textwidth]{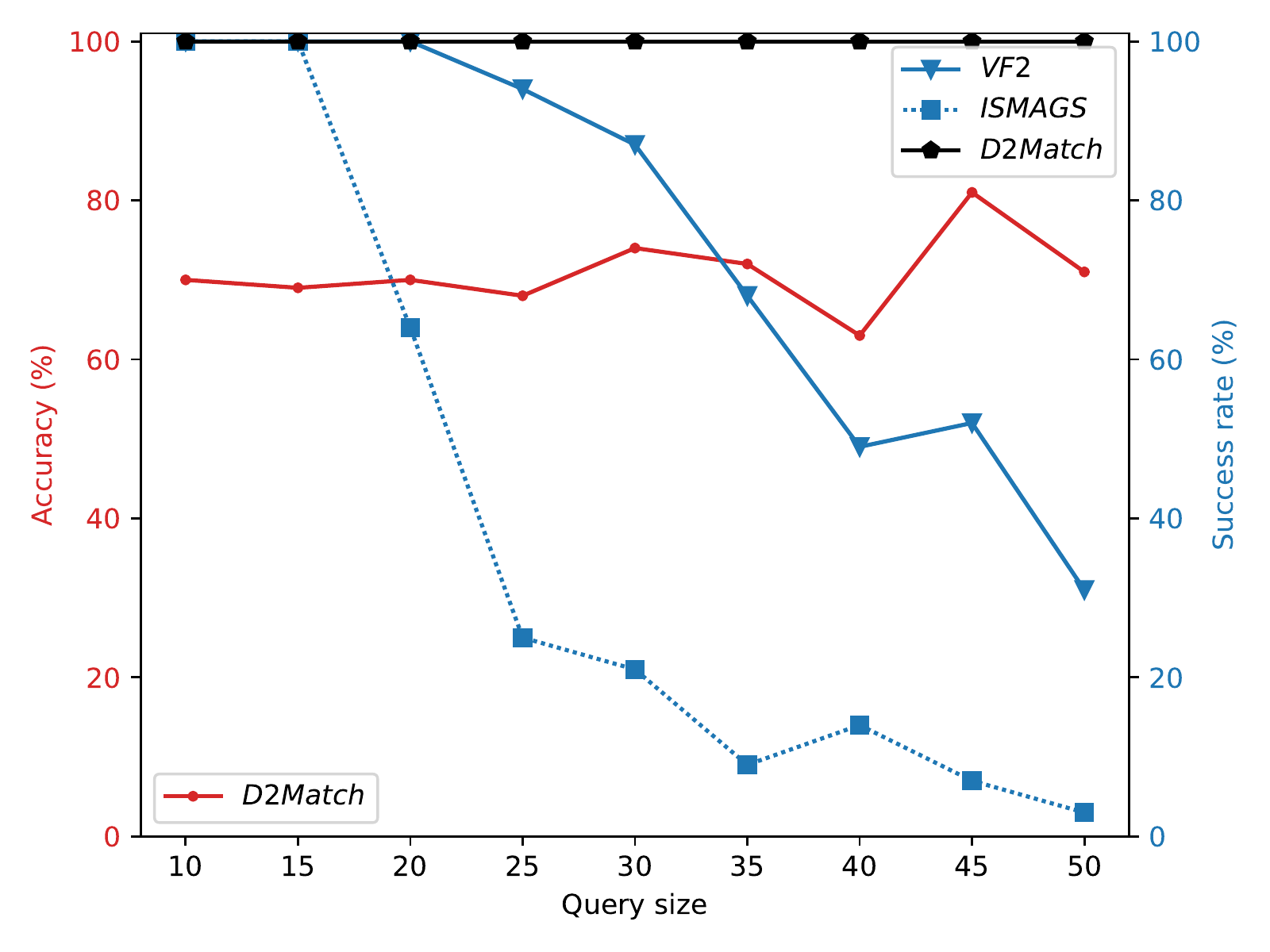}\vsa 
\caption{\label{fig:exact compare} \footnotesize Comparison with exact methods} \vsa 
% \todo{Rewrite the figures and the terms} 
\end{figure}

We show in our experiment that the failure of exact matching methods increases significantly when the target graph has more than 30 nodes, compared to our stable performance, indicating the incompetence of these methods on large-scale datasets.

\end{document}